\newtheorem{definition}{Definition}
\newtheorem{proposition}{Proposition}
\newtheorem{corollary}{Corollary}
\newtheorem{property}{Property}
\def\eqref#1{equation~\ref{#1}}
\def\1{\bm{1}}
\DeclareMathAlphabet{\mathsfit}{\encodingdefault}{\sfdefault}{m}{sl}
\SetMathAlphabet{\mathsfit}{bold}{\encodingdefault}{\sfdefault}{bx}{n}
\newcommand{\tens}[1]{\bm{\mathsfit{#1}}}
\def\tA{{\tens{A}}}
\title{Attention Flows for General Transformers}
\author{%
  Niklas Metzger\thanks{Equal contribution. Random order.}\\
  \small{CISPA Helmholtz Center for Information Security}\\
  \small{Saarbr\"ucken, Germany}\\
  \small{\texttt{niklas.metzger@cispa.de}}
  \And
  Christopher Hahn\footnotemark[1]\\
  \small{CISPA Helmholtz Center for Information Security}\\
  \small{Saarbr\"ucken, Germany}\\
  \small{\texttt{christopher.hahn@cispa.de}}
  \And
  Julian Siber\footnotemark[1]\\
  \small{CISPA Helmholtz Center for Information Security}\\
  \small{Saarbr\"ucken, Germany}\\
  \small{\texttt{julian.siber@cispa.de}}
  \And
  Frederik Schmitt\footnotemark[1]\\
  \small{CISPA Helmholtz Center for Information Security}\\
  \small{Saarbr\"ucken, Germany}\\
  \small{\texttt{frederik.schmitt@cispa.de}}
  \And
  Bernd Finkbeiner\\
  \small{CISPA Helmholtz Center for Information Security}\\
  \small{Saarbr\"ucken, Germany}\\
  \small{\texttt{finkbeiner@cispa.de}}
}
\begin{document}

\maketitle

\begin{abstract}
In this paper, we study the computation of how much an input token in a Transformer model influences its prediction.
We formalize a method to construct a flow network out of the attention values of encoder-only Transformer models and extend it to general Transformer architectures including an auto-regressive decoder.
We show that running a maxflow algorithm on the flow network construction yields Shapley values, which determine the impact of a player in cooperative game theory.
By interpreting the input tokens in the flow network as players, we can compute their influence on the total attention flow leading to the decoder's decision.
Additionally, we provide a library that computes and visualizes the attention flow of arbitrary Transformer models.
We show the usefulness of our implementation on various models trained on natural language processing and reasoning tasks.
\end{abstract}

\section{Introduction}
\label{sec:intro}
\emph{Transformers}~\citep{DBLP:conf/nips/VaswaniSPUJGKP17} are the dominant machine learning architecture of the recent years finding application in NLP (e.g., BERT~\citep{DBLP:conf/naacl/DevlinCLT19}, GPT-3~\citep{DBLP:conf/nips/BrownMRSKDNSSAA20}, or LaMDA~\citep{LaMBDA}), computer vision (see~\citet{khan2021transformers} for a survey), mathematical reasoning~\citep{lample2019deep,han2021proof}, or even code and hardware synthesis~\citep{chen2021evaluating,DBLP:journals/corr/abs-2107-11864}.
The Transformer architecture relies on an \emph{attention mechanism}~\citep{DBLP:journals/corr/BahdanauCB14} that mimics cognitive attention, which sets the focus of computation on a few concepts at a time.
In this paper, we rigorously formalize the method of constructing a flow network out of Transformer attention values~\citep{DBLP:conf/acl/AbnarZ20} and generalize it to models including a decoder.
While theoretically yielding a Shapley value~\citep{shapley201617} quite trivially, we show that this indeed results in meaningingful explanations for the influence of input tokens to the total flow effecting a decoder's prediction.

Their applicability in a wide range of domains has made the Transformer architecture incredibly popular.
Models are easily accessible for developers around the world. For example at \url{huggingface.co}~\citep{wolf2019huggingface}.
Blindly using or fine tuning these models, however, might lead to mispredictions and unwanted biases, which will have a considerable negative effect in their application domains.
The sheer size of the Transformer models makes it impossible to analyze
the networks by hand.
Explainability and visualization methods, e.g.,~\cite{vig2019multiscale}, aid the machine learning practitioner and researcher in finding the cause for a misprediction or reveal unwanted biases. The training method or the dataset can then be adjusted accordingly.

\citet{DBLP:conf/acl/AbnarZ20} introduced \emph{Attention Flow} as a post-processing interpretability technique that treats the self-attention weight matrices of a Transformer encoder as a flow network.
This technique allows for the analysis of the flow of attention through the Transformer encoder:
Computing the maxflow for an input token determines the impact of this token to the total attention flow.
\citet{DBLP:conf/acl/EthayarajhJ20} discussed a possible relation of the maxflow computation through the encoder flow network to Shapley values, which is a concept determining the impact of a player in cooperative game theory, and can be applied to measure the importance of a model's input features. However, the lack of a clear formalization of the underlying flow network has made it difficult to assess the validity of their claims, which we aim to adress in this work.

We extend our formalization of the approach to a Transformer-model-agnostic technique including general encoder-decoder Transformers and decoder-only Transformers such as GPT models~\cite{radford2018improving}.
While, after applying a positional encoding, the encoder processes the input tokens as a whole, the decoder layers operate auto-regressively, i.e., a sequence of tokens will be predicted step-by-step, and already predicted input tokens will be given as an input to the decoder.
This results in a significantly different shape of the flow network and, in particular, requires normalization to account for the bias towards tokens that were predicted later than others.
We account for the auto-regressive nature of the Transformer decoder by ensuring \emph{positional independence} of the computed maxflow values.
We implemented our constructions as a Python library, which we will publish under the MIT license.
We show the usefulness of our approach when analyzing the attention flow of general Transformers.

\emph{Related Work.}
We would like to emphasize the work on which we build: ~\citet{DBLP:conf/acl/AbnarZ20} introducing attention flow for Transformer encoders and ~\citet{DBLP:conf/acl/EthayarajhJ20} drawing a possible connection between encoder attention flows and Shapley values.
An overview on the necessity of explainability is given by~\citet{samek2017explainable} and \citet{burkart2021survey} provide a survey over existing methods.
An overview over many Shapley value formulations for machine learning models is given by~\citet{sundararajan2020many}, which are not restricted to Transformer models and not considering attention flow~\citep{lindeman1980introduction,gromping2007estimators,owen2014sobol,owen2017shapley,vstrumbelj2009explaining,vstrumbelj2014explaining,datta2016algorithmic,NIPS2017_7062,lundberg2018consistent,aas2019explaining,sun2011axiomatic,sundararajan2017axiomatic,agarwal2019new}.
Shapley values are also used for the valuation of machine learning data~\citep{ghorbani2019data}.
Visualization of raw attention values has been studied, for example, by~\citet{vig2019multiscale} and~\citet{wang2021dodrio}.
\citet{chefer2021transformer} study relevancy for Transformer
networks in computer vision by assigning local relevance based on
the Deep Taylor Decomposition principle~\cite{montavon2017explaining}.

\section{Attention Flow}
\label{sec:attn_flow}
\emph{Attention Flow}~\citep{DBLP:conf/acl/AbnarZ20} is a post-processing interpretability technique that treats the self-attention weight matrices of the Transformer encoder as a flow network and returns the maximum flow through each input token.
Formally, a flow network is defined as follows.

\begin{definition}[Flow Network]
Given a graph $G=(V,E)$, where $V$ is a set of vertices and $E \subseteq V \times V$ is a set of edges,
a flow network is a tuple $(G,c,s,t)$, where $c: E \rightarrow \mathbb{R}_\infty$ is the capacity function
and $s$ and $t$ are the source and terminal (sink) nodes respectively.
A flow is a function $f: E \rightarrow \mathbb{R}$ satisfying the following two conditions:
Flow conservation: $\forall v \in V\backslash \{s,t\}.~x_f(v) = 0,$
where $x_f:V \rightarrow \mathbb{R}$ is defined as $x_f(u) = \Sigma_{v \in V} f(v,u),$
and capacity constraint: $\forall e \in E.~ f(e) \leq c(e)$.
\end{definition}

The value of flow $|f|$ is the amount of flow from the source node $s$ to the terminal node $t$:
$
|f|=\sum _{v:(s,v)\in E}f_{sv}.
$
For a given set $K$ of nodes, we define $|f(K)|$ as the flow value from $s$ to $t$ only passing through nodes in $K$:
$
|f(K)|=\sum _{v:(s,v)\in E, v \in K}f_{sv}.
$
We define $|f_o(v)|$ to be the total outflow value of a node $v$ and $|f_i(v)|$ to be the total inflow value of a node $v$.
In optimization theory, the maximum flow problem $\mathit{max}(|f|)$~\citep{harris1955fundamentals} is to find flows that push the maximum possible flow value $|f|$ from the source node $s$ to the terminal node $t$, which we denote by $f_\mathit{max}$.


\begin{figure}[t]
\begin{subfigure}[t]{0.48\textwidth}
\centering
    \includegraphics[width=.9\textwidth]{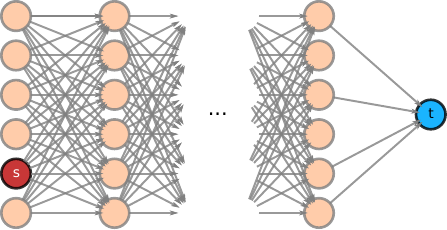}
\caption{With $|L|+1$ node columns, $|L|$ edge columns, $I'=\{i_5\}$ and $J=\{1,3,5,6\}$. The red node depicts the input token for which the maximum flow is computed. The blue node represents the terminal node $t$.}
\label{fig:encoder_attn_flow}
\end{subfigure}
\hfill
\begin{subfigure}[t]{0.48\textwidth}
    \centering
    \includegraphics[width=.9\textwidth]{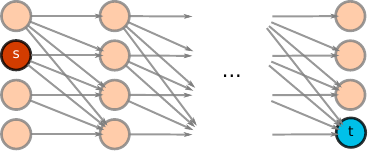}
    \caption{Input token set $O' = \{o_2\}$ and embedding $t$, where the output token $o_5$ is currently predicted. The first ``input'' token, i.e., $o_1$ is the special start token of the decoder.}
    \label{fig:decoder_attn_flow}
\end{subfigure}
\caption{Encoder attention in a flow network (left) and decoder attention in a flow network (right).}
\end{figure}

\subsection{Encoder Attention Flow}
\label{sec:encoder_only}
Given an encoder-only Transformer model, such as the BERT~\cite{DBLP:conf/naacl/DevlinCLT19} model family, with $H$ attention heads, $L$ layers, $M$ input tokens $I=\{i_1,\ldots,i_M\}$ and the resulting self-attention~tensor~$\tA^{E} \in \mathbb{R}^{H \times L \times M \times M}$.
For some $X \in \mathbb{N}$, we define $[X]$ as the set $\{1,\ldots,X\}$.
For a set of positions $J$, a subset of input tokens $I' \subseteq I$ and subset of heads $H' \subseteq H$, we construct a flow network $\mathcal{F}_{enc} (\tA^E, I',J) = (G,c,s,t)$ as follows:
%
\begin{align*}
&V := (I \times [L + 1]) \cup \{s,t\} \enspace , \phantom{scaling} c((i_j,l),v') := \begin{cases} 
      \frac{1}{H'}\sum_{h=1}^{H'} \tA^E_{h,l,k,j} & v' =(i_k,l+1)\\
      \infty & v' = t \enspace \end{cases} \enspace ,\\
&E := \{((i_j,l),(i_k,l+1))~|~i_j,i_k \in I \wedge l \in [L + 1]\} \\
& ~~~~~~~ \cup \{((i_j, L + 1), t) ~|~ i_j \in I \wedge j \in J\}\\
& ~~~~~~~ \cup \{(s,(i', 0)) ~|~ i' \in I'\} \enspace .
\end{align*}

%
%
%
We visualize this flow network translation in Figure~\ref{fig:encoder_attn_flow}.
The flow network consists of $L+1$ columns of nodes and $L$ columns of edges. 
The attention values are encoded as capacities on the edges, thus the underlying graph of the flow network requires one additional column of nodes.
Computing the maximum flow through this network determines the contribution of the set of input tokens $I'$ to the attention flow towards the final encoder embeddings given by $J$. Note that the nodes in columns greater than $1$ correspond to encoder embeddings and can not be interpreted as input tokens anymore.
Residual connections can be taken into account as proposed by~\citet{DBLP:conf/acl/AbnarZ20}: adding an identity matrix $I$ and re-normalize as $0.5\tA + 0.5I$.


By setting the start node $s$ successively to singleton sets containing only a single input token and all final embeddings to $t$, we can compute the encoder flow for every encoder input token as introduced by~\citet{DBLP:conf/acl/AbnarZ20}.

The encoder flow network construction can also be used for models including a classification task (see Section~\ref{sec:experiments}).
To determine the influence of input tokens to the attention flow towards deciding the class, the terminal node $t$ is then only connected to the final embedding of the classification token.

\subsection{Decoder Attention Flow}
\label{sec:decoder_only}
Generative Transformer models that involve a decoder require a significantly different shape of flow network.
We begin by investigating decoder-only models, with $H$ attention heads, $L$ layers, $N$ ``output'' tokens $O = \{o_1,\ldots,o_N\}$ and the self-attention tensor $\tA^D \in \mathbb{R}^{H \times L \times N \times N}$.
Since we consider decoder-only models, a prefix subset $O_\mathit{input} \subseteq O$ will be given as a problem input to the neural network model.
 Note that the first output token is always a special start token.
For a set of output tokens $O' \subseteq O$, the position $n$ of output token $o_n \in O$ and subset of heads $H' \subseteq H$, the construction of a flow network $\mathcal{F}_{dec}(\tA^D, O', n) = (G,c,s,t)$ follows the structure of the decoder self-attention:
\resizebox{\linewidth}{!}{
\begin{minipage}{\linewidth}
\begin{align*}
&V := O \times [L + 1] \enspace , \phantom{space} E := \{(o_j,l),(o_k,l+1))~|~o_j,o_k \in O \wedge l \in [L + 1] \wedge j \leq k\} \enspace ,\\
&c((o_j,l),(o_k,l+1)) := 
      \frac{1}{H'}\sum_{h=1}^{H'} \tA^D_{h,l,k,j} \enspace , 
\phantom{spa}s :=  \{(s,(o', 0)) ~|~ o' \in O'\} \enspace ,
\phantom{spac} t := (o_{n-1}, L + 1) \enspace .
\end{align*}
\end{minipage}}

We visualize the construction in Figure~\ref{fig:decoder_attn_flow}.
Because of the auto-regressive nature of the Transformer decoder, we compute the maxflow to the last embedding of the decoder as this embedding will be used in the Transformer to predict the next token.
%
%
The auto-regression, however, requires a normalization to account for the bias towards tokens that were predicted later than others (later predicted tokens have more incoming edges).
Intuitively, we require that the maxflow computation for any sub flow network $F'$ constructed from the decoder flow network $F$ to be independent of the absolute position of $F'$ in $F$.
Formally, assuming $\tA^D$ to have the same value $c$ for every entry, i.e., the capacity of every edge in the resulting-flow network is fixed to $c$, we require for every position $n$ that 
$
\forall o_m \in O.~\mathit{maxflow}(\mathcal{F}_{dec}(\tA^D, \{o_m\}, n)) = c,
$
which we call \emph{positional independence}.
We ensure this by dividing the result of a max flow computation for a given start token $o_m$ and end token $o_n$ by
$
1 + ( O - (n - m) ) - m.
$
%
For a subset $O' \subseteq O$ and a position $n$ (where $\forall o'_m \in O'. m < n)$ and heads $H'$, we can thus compute the influence of the token set $O'$ to the total attention flow towards the embedding that predicts the $n$-th token whether it served as part of the problem input or is an already predicted output token.


\subsection{Encoder-Decoder Attention Flow}
For Transformer models consisting of an encoder and a decoder, we combine both flow network translations with the encoder-decoder attention.
Figure~\ref{fig:enc_dec_attn_flow} shows the structure of the flow network for a Transformer model with an encoder (top) and a decoder (bottom). The last nodes of the flow network corresponding to the final embedding of the encoder are, following the Transformer architecture, connected to every node layer of the network corresponding to the decoder. We omit some encoder-decoder edges for better visualization.
Given a Transformer with $H$ attention heads, $L$ layers, $M$ input tokens $I = \{i_0,\ldots,i_M\}$, $N$ output tokens $O = \{o_0,\ldots,o_N\}$, and resulting encoder self-attention tensor $\tA^E \in \mathbb{R}^{H \times L \times M \times M}$, decoder self-attention tensor $\tA^D \in \mathbb{R}^{H \times L \times N \times N}$ and encoder-decoder attention tensor $\tA^C \in \mathbb{R}^{H \times L \times N \times M}$. For a set of input tokens $I'$, the position $n$ of output token $o_n$ and subset of heads $H' \subseteq H$, we construct a flow network $\mathcal{F}(\tA^E, \tA^D, \tA^C, I', n) = (G, c, s, t)$ from flow networks $\mathcal{F}_{enc}(\tA^E, I',\emptyset) = ((V_{enc}, E_{enc}),c_{enc}, s_{enc}, t_{enc})$ and $\mathcal{F}_{dec}(\tA^D, \emptyset, n) = ((V_{dec}, E_{dec}),c_{dec}, s_{dec}, t_{dec})$ as follows:
\begin{align*}
&V := V_{enc} \cup V_{dec} \cup s \enspace , \phantom{space} E := E_{enc} \cup E_{dec} \cup \{((i_j, L+1), v)~|~i_j\in I \wedge v \in V_{dec}\} \enspace ,\\
&\phantom{a lot of space, even more though still} ~~~~~~~ \cup \{(s_{enc},(o_m,0)) ~|~ o_m \in I \wedge m < n\}\\
&c(v,v') := \begin{cases}
        c_{enc}(v, v') & v = (i_j, l),v' = (i_k, l'),\\
                       &     \phantom{spaceing}i_j, i_k \in I\\
        c_{dec}(v, v') & v = (o_j, l), v' = (o_k, l'), \\
        & \phantom{spaceing}o_j, o_k \in O\\ 
        \frac{1}{H'}\sum_{h=1}^{H'} \tA^C_{h,l,k,j} &v = (i_j, L + 1),\\
                &~v' = (o_k, l), i_j \in I, o_k \in O\\
        \infty & v = s, v' \in I' \enspace ,\\
      \end{cases}\\
&t := (o_n, L+1) \enspace ,
\end{align*}

where $l_e$ denotes a layer from the encoder and $v_d$ denotes a node from the decoder.
Again, we have to normalize to account for the auto-regressive bias, i.e., require positional independence.
For a given set on input tokens $I'$ and heads $H'$, we can thus asses the contribution of this set to the total attention flow towards the embedding that predicts the $n$-th token by computing the maxflow through this network.
If one is interested in the influence of an already computed output token $o_m$, where $m<n$, on the prediction of $o_n$, then the construction for the decoder-only case in Section~\ref{sec:decoder_only} applies.

\subsection{Subset and Single Head Attention Flow}
\label{sec:single_head}
The flow network constructions are applicable to subsets of heads, and especially, single heads.
Because of the particularities of many Transformer implementations, the results have to be interpreted carefully.
The results of head computations are joined using a linear projection, such that each head has access to the computations of all heads in the previous layers.
Meaning that, the task of a head in layer $l$ can be independent of its task in previous layers $l' < l$.

In practice, however, heads are biased towards keeping their respective tasks, such that we also found good interpretability results by considering the attention flow of attention heads independently (see Section~\ref{sec:experiments}).
A flow network can be constructed for a single head by following the constructions above setting $H'$ to every singleton set.
As mentioned, however, these results have to be analyzed carefully with the particularities of Transformer implementations in mind.

\subsection{Optimizations}
In this section, we give a brief overview on possible relaxations on the flow network if the computation time of the maxflow for large Transformer models exceeds time limits.
First, note that the flow network only needs to be constructed once.
%
As expected, the computation time of the maxflow in the network constructions increases with larger input and output sequences.
Running time can be traded against heuristically shrinking the size of the flow network.
This can be done in two dimensions.
Following the practical assumption that heads often keep their tasks throughout subsequent layers, the first dimension is to shrink the flow network on the $x$-axis.
This can be done by simply skipping some of the inner layers of the network or by merging layers by taking the average of the raw attention values across layers as capacities.
Furthermore, the network can also be shrunk in the $y$-dimension.
This can be done in the same way by grouping input and output tokens. For example, tokens $\mathit{predict}$ and $\mathit{ed}$ can be combined into one node.
\label{sec:optimization}

\begin{figure}[t]
    \centering
    \includegraphics[scale=1.5]{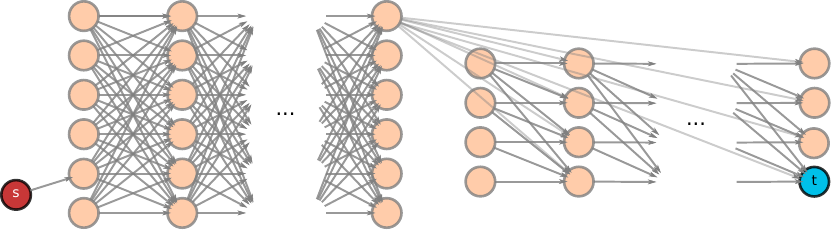}
    \caption{Sketch of the Encoder-decoder attention flow network for input token $i_5$ and embedding $t$, which is used to predict $o_5$. Encoder-decoder connections are sketched for the first node.} 
    \label{fig:enc_dec_attn_flow}
\end{figure}

\section{Shapley Value Explanations}
In this section, we show how the extended flow network constructions over the Transformer decoder $\mathcal{F}_{dec}(\tA^D, O', n)$ and $\mathcal{F}(\tA^E, \tA^D, \tA^C, I', n)$ induce Shapley value explanations for the tokens of the input sequence.
\label{sec:shapley}
The Shapley value~\citep{shapley201617} is a solution concept determining the impact of a player in cooperative game theory and an increasingly popular concept to determine the influence of certain input features on a model's decision.
\begin{definition}
A game with transferable utility (TU) is a pair $(P,v)$, with $P = \{1,\ldots,p\}$ being a finite
set of players and $v: 2^P \rightarrow \mathbb{R}$ being the payoff function.
\end{definition}

A subset $S \subseteq P$ is called a \emph{coalition}.
The payoff function $v$ assigns every coalition of players $S$
a real number $v(S) \in \mathbb{R}$ with $v(\emptyset) = 0$.
We denote the \emph{share} of a player $i$ of the allocated payoff as $\varphi_i(v)$.
The encoding of the attention values as a flow network can be seen as a TU game.
A node in the flow network represents a player and the total flow through the network represents the total payoff~\citep{DBLP:conf/acl/EthayarajhJ20}.
The Shapley values of the players in a TU game are formally defined as follows.

\begin{definition}[Shapley Value]
Let $\Pi(P)$ be the set of all player permutations and let $\pi \in \Pi(P)$ be a permutation of players.
Let all players ahead of a player $i$ be defined as $P_{<i}(\pi) := \{j \in P: \pi(j) < \pi(i)\}.$
The Shapley value $\varphi$ is defined as the share of payoff for a given player $i \in P$:
$\varphi_i(P,v) := \frac{1}{p!} \sum_{\pi \in \Pi(P)} (v(P_{<i}(\pi) \cup \{i\}) - v(P_{<i}(\pi)))$.
\end{definition}

From a game-theoretic viewpoint, Shapley values are well-suited for determining the payoff share that players deserve, as the values satisfy the following desirable properties.

\begin{property}[Efficiency]
All of the available payoff $v(P)$ is distributed between the players:
$v(P) = \sum_{i \in P} \varphi_i(P,v).$
\end{property}

\begin{property}[Symmetry]
Two players that have the same impact on the total payoff when joining a coalition,
receive the same share of the payoff: $\forall S \subseteq P\backslash\{i,j\}.~v(S \cup \{i\}) = v(S \cup \{j\})\rightarrow \varphi_i(P,v) = \varphi_j(P,v).$
\end{property}

\begin{property}[Null Player]\label{def:null:player}
A player that has zero impact upon joining a coalition, receives no share of the total payoff:
$\forall S \subseteq P\backslash \{i\}.~v(S) = v(S\cup\{i\})\rightarrow
\varphi_i(P,v) = 0.$
\end{property}

\begin{property}[Additivity]
The share of a player in TU game $(P,v+w)$ is the sum of their shares
in games $(P,v)$ and $(P,w)$:
$\forall i \in P.~\varphi_i(P,v+w) = \varphi_i(P,v) + \varphi_i(P,w).$
\end{property}

The properties above are also responsible for making Shapley values an attractive approach for explaining a model's decisions in the ML community, i.e., features that do not contribute to the accuracy of a model should be null players, and features that contribute equally should satisfy symmetry.

\begin{proposition}[Decoder-Only Flow Is a Shapley Value]
\label{prop:decoder_only}
Consider a Transformer decoder with $H$ attention heads, $L$ layers, $N$ ``output'' tokens  $O = \{o_1,\ldots,o_{N}\}$ and the self-attention tensor $\tA^D \in \mathbb{R}^{H \times L \times N \times N}$.
Let $f^o_\mathit{max}$ be the maxflow computed in the flow network $\mathcal{F}_{dec}(\tA^D, \{o\}, n)$ as defined in the previous section.
Consider the TU-game $(P, v)$, where the players $p \in P = \{1, ..., N\}$ correspond to nodes $(o_p,0)$ from the first layer of the Transformer decoder.
For a given coalition $S \subseteq P$, let the value function be $v(S) = \sum_{s \in S} f^{o_s}_{max}$, i.e., the sum of max-flows of nodes corresponding to $S$. Then, the max-flow $f^{o_s}_{max}$ for some $p \in P$ is its Shapley Value.
\end{proposition}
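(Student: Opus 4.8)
The plan is to exploit the fact that the value function defined in the statement is \emph{additive} (modular): by construction $v(S) = \sum_{s \in S} f^{o_s}_{max}$ is merely a sum of per-player constants, and $v(\emptyset) = 0$ holds vacuously as the empty sum, so $(P,v)$ is a legitimate TU-game. For additive games the Shapley value collapses to the individual per-player weights, so the entire statement reduces to a one-line marginal-contribution computation.

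First I would compute the marginal contribution of an arbitrary player $p \in P$ to an arbitrary coalition $S \subseteq P \setminus \{p\}$. By additivity,
\[
v(S \cup \{p\}) - v(S) = \sum_{s \in S \cup \{p\}} f^{o_s}_{max} - \sum_{s \in S} f^{o_s}_{max} = f^{o_p}_{max},
\]
so the marginal contribution equals $f^{o_p}_{max}$ \emph{independently} of the coalition $S$. In particular, this holds for the predecessor set $P_{<p}(\pi)$ of $p$ under every permutation $\pi \in \Pi(P)$.

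Next I would substitute this constant marginal into the definition of the Shapley value. Since every summand $v(P_{<p}(\pi) \cup \{p\}) - v(P_{<p}(\pi))$ equals $f^{o_p}_{max}$ regardless of $\pi$, and there are $N! = |P|!$ permutations in $\Pi(P)$, the sum evaluates to $N! \cdot f^{o_p}_{max}$. Dividing by $N!$ yields $\varphi_p(P,v) = f^{o_p}_{max}$, which is exactly the claim. (Equivalently, one could invoke additivity together with the Null Player and Efficiency properties stated above to force the same allocation, but the direct computation is the cleanest route.)

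I do not expect any genuine mathematical obstacle in the Shapley argument itself; the result is immediate once additivity is observed, which is why the introduction describes it as holding ``quite trivially.'' The real conceptual content lies outside this derivation: in arguing that the flow-network value function is the appropriate game to associate with the decoder, and that the individual max-flows $f^{o_s}_{max}$ are meaningful and mutually comparable as explanations. In particular, the positional-independence normalization of Section~\ref{sec:decoder_only} is what legitimizes summing $f^{o_s}_{max}$ across tokens predicted at different positions; without it the additive $v$ would conflate quantities living on different scales. That modelling justification, rather than the calculation above, is where I would anticipate scrutiny.
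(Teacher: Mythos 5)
Your proof is correct and takes essentially the same route as the paper, which simply observes that each $f^{o}_{max}$ is an independent computation and the coalition payoff is a sum of these contributions, so the result ``trivially qualifies as a Shapley value.'' You merely make explicit the marginal-contribution calculation that the paper leaves implicit, and your closing remark about where the real content lies (the modelling choice and the positional-independence normalization) matches the paper's own framing of the proposition as theoretically trivial.
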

\begin{proof}
The proof immediately follows from the fact that every max-flow of some node $f^{o}_{max}$ is an independent computation and the payoff of a coalition is defined as a sum of these independent contributions, which trivially qualifies as a Shapley value.
\end{proof}
Although this theoretical correspondence to a Shapley value is trivial, we show in our experiments in the following section that the maxflow computation indeed yields meaningful explanations for the network's attention flow.

Note that our line of reasoning significantly differs from \citet{DBLP:conf/acl/EthayarajhJ20}. In particular, we compute a seperate max-flow for every token in the set of players. This is because key assumptions about flow networks that they make in their proof do not hold: They argue that as long as nodes come from the same layer, blocking flow through some of these nodes does not change the possible flow through the others, such that they can deduce that the utility a player adds when joining a coalition is independent of the identity of the players already in the coalition. However, this is clearly not the case: Several nodes from the same layer can compete for capacity downstream in the network even if they have no direct connection, e.g., if we have two tokens $o_1, o_2$ in one layer each attended to with 0.5 attention by a node $o_3$ which itself is only attended to with $0.5$ attention. Now, clearly the utility $o_1$ adds upon joining a coalition as defined by \citet{DBLP:conf/acl/EthayarajhJ20} does in fact depend on whether $o_2$ is already part of it. While they unfortunately have not formalized their Attention Flow construction explicitly, we deduce from the above discussion that it may in fact violate the Symmetry of a Shapley value, as the payoff for $o_1$ and $o_2$ can be unequally allocated.


The ideas outlined for Proposition~\ref{prop:decoder_only} also apply to the encoder-decoder attention flow. In the following, let $f^i_\mathit{max}$ be the maxflow computed in the flow network construction $\mathcal{F}(\tA^E, \tA^D, \tA^C, \{ i \}, n)$ over the Transformer with $H$ attention heads, $L$ layers, $M$ input tokens $I = \{i_0,\ldots,i_M\}$, $N$ output tokens $O = \{o_0,\ldots,o_N\}$, and resulting encoder self-attention tensor $\tA^E \in \mathbb{R}^{H \times L \times M \times M}$, decoder self-attention tensor $\tA^D \in \mathbb{R}^{H \times L \times N \times N}$ and encoder-decoder attention tensor $\tA^C \in \mathbb{R}^{H \times L \times N \times M}$.


\setcounter{corollary}{1}

\begin{corollary}[Encoder-Decoder Flow Is a Shapley Value]
\label{prop:enc_dec}
Consider the TU-game $(P, v)$, where the players $p \in P = \{1, ..., N\}$ correspond to nodes $(i_p,0)$ from the first layer.
Let the value function or a given coalition $S \subseteq P$ be defined as $v(S) = \sum_{s \in S} f^{i_s}_{max}$, i.e., the sum of max-flows of nodes corresponding to $S$. Then, the max-flow $f^{i_s}_{max}$ for some $p \in P$ is its Shapley Value.
\end{corollary}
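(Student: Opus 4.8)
The plan is to reuse, essentially verbatim in structure, the argument given for Proposition~\ref{prop:decoder_only}, since the value function here is again defined additively over independently computed max-flows. Moving from the decoder-only to the encoder-decoder setting changes only the internal topology of the network in which each $f^{i_s}_{max}$ is computed, and this topology never enters the game-theoretic reasoning. Thus the whole task reduces to verifying that an additively-defined TU-game has Shapley values equal to the individual player contributions.

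First I would observe that the value function $v(S) = \sum_{s \in S} f^{i_s}_{max}$ is a sum of separate max-flow computations, each depending on a single source token alone. Consequently, for any player $k$ and any coalition $T \subseteq P \setminus \{k\}$, the marginal contribution is
\[
v(T \cup \{k\}) - v(T) = f^{i_k}_{max},
\]
which is independent of $T$. This additivity, with no interaction terms between players, is the entire content of the claim: joining any coalition always contributes exactly the player's own max-flow.

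Next I would substitute this constant marginal contribution into the Shapley-value formula. For every permutation $\pi \in \Pi(P)$, the predecessor set $P_{<k}(\pi)$ is a coalition not containing $k$, so $v(P_{<k}(\pi) \cup \{k\}) - v(P_{<k}(\pi)) = f^{i_k}_{max}$ independently of $\pi$. Since the summand no longer depends on $\pi$, the sum over all $|P|!$ permutations collapses:
\[
\varphi_k(P,v) = \frac{1}{|P|!} \sum_{\pi \in \Pi(P)} f^{i_k}_{max} = f^{i_k}_{max},
\]
which is exactly the assertion of the corollary.

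I do not anticipate a genuine obstacle, as the result is deliberately engineered to be trivial. The one point worth stating explicitly — and precisely the point on which~\citet{DBLP:conf/acl/EthayarajhJ20} erred, as discussed just above — is that computing a \emph{separate} max-flow for each source token, rather than blocking subsets of nodes within a single shared network, is exactly what guarantees that the contributions are genuinely independent and the game genuinely additive. Once this observation is in place, the encoder-decoder case differs from Proposition~\ref{prop:decoder_only} only in details that the argument never touches, so the proof carries over unchanged.
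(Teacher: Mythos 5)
Your proposal is correct and follows essentially the same route as the paper, which states the corollary without a separate proof and appeals to the one-line argument for Proposition~\ref{prop:decoder_only}: the value function is an additive sum of independently computed max-flows, so each player's marginal contribution is constant and the Shapley formula collapses to $f^{i_s}_{max}$. Your explicit expansion of that collapse, and your remark that the encoder-decoder topology never enters the game-theoretic reasoning, are exactly the points the paper relies on implicitly.
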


\section{Experiments}
\label{sec:experiments}
In this section, we exemplarily report on experiments conducted in the domains of natural language processing (NLP) and logical reasoning (LR).
We implemented the constructions from Section~\ref{sec:attn_flow}.\footnote{The code and experiments are part of the library ML2 (\url{https://github.com/reactive-systems/ml2}).}
The architecture details of the models used in the experiments are shown in Table~\ref{tab:networks}.
We visualize the maxflow attention values in a heatmaps (see, for example, Figure~\ref{fig:propsat:null:player}).
The maxflow is computed with \textsc{networkx}~\cite{networkx} and the heatmaps comparing the attention flow from input/predicted token to current predicted token are visualized with \textsc{seaborn}~\cite{Waskom2021}.
The heatmaps are either showing only the attention flow from input tokens if the model is encoder-only (\textit{enc.}), are seperated into different heatmaps for input tokens and auto-regressive tokens for encoder and decoder (\textit{enc.} + \textit{dec.}), or show one heatmap for all tokens if the architecture is decoder only (\textit{dec.}).
Higher values represent higher attention. 
%
%
%
%
\begin{figure}[t]
\begin{subfigure}[b]{.62\textwidth}
    \centering
    \resizebox{.95\textwidth}{!}{
    \includegraphics[width=.32\linewidth, valign=t]{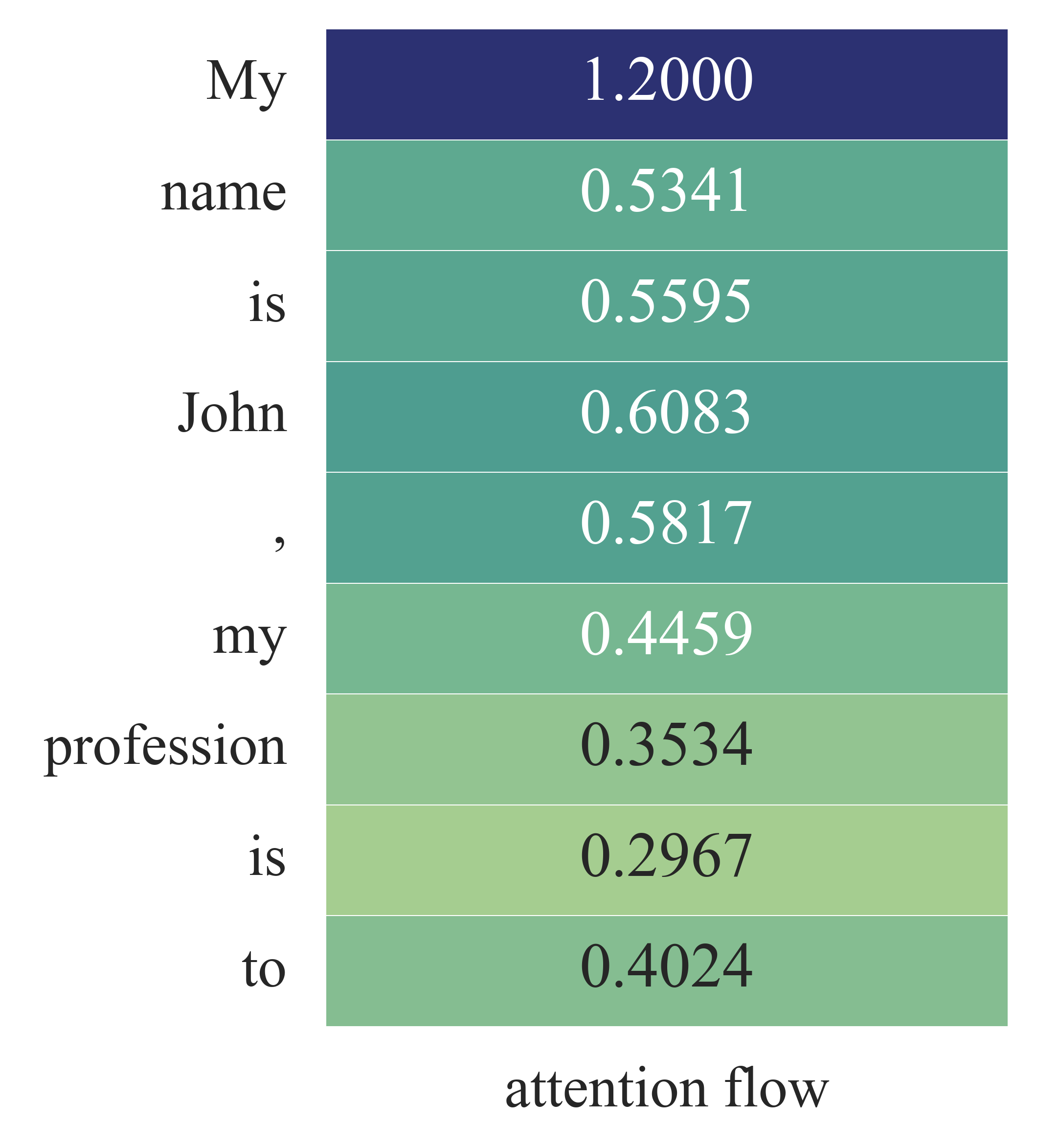}
    \includegraphics[width=.32\linewidth, valign=t]{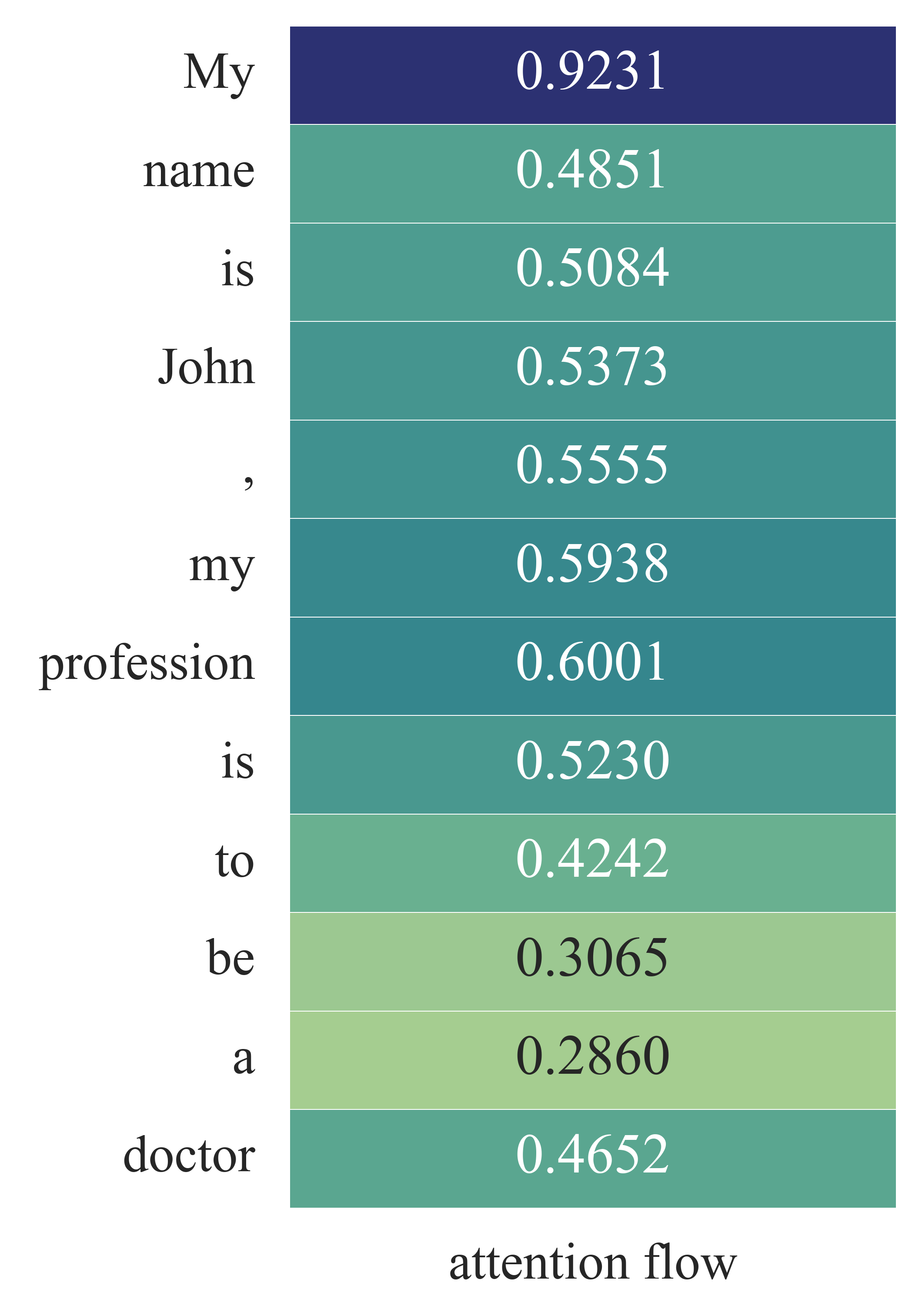}
    \includegraphics[width=.32\linewidth, valign=t]{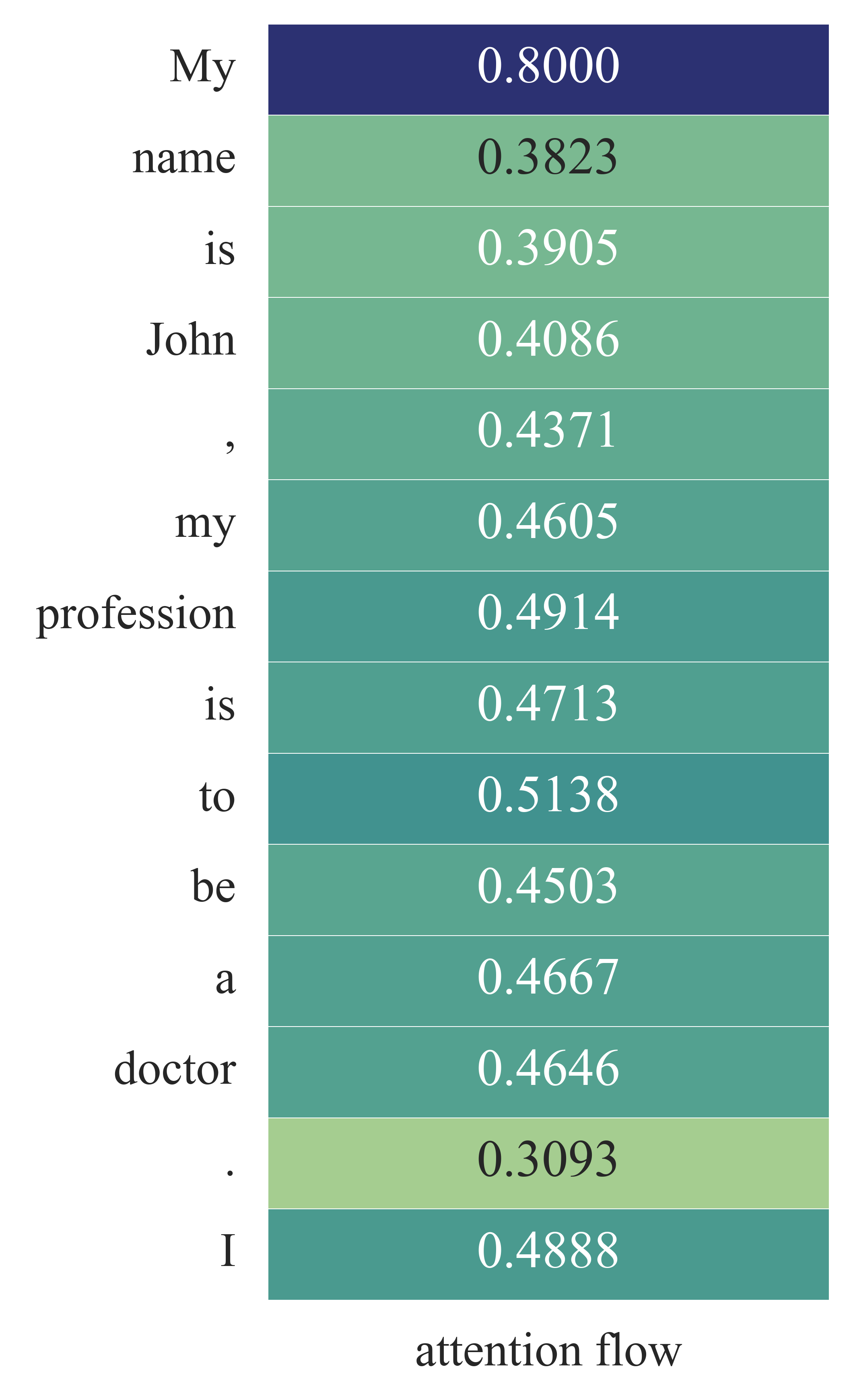}
    }
    \caption{}
    \label{fig:GPT2:flow:changes}
\end{subfigure}
\hfill
\begin{subfigure}[b]{.37\textwidth}
 \centering
    \includegraphics[width=.49\linewidth,valign=t]{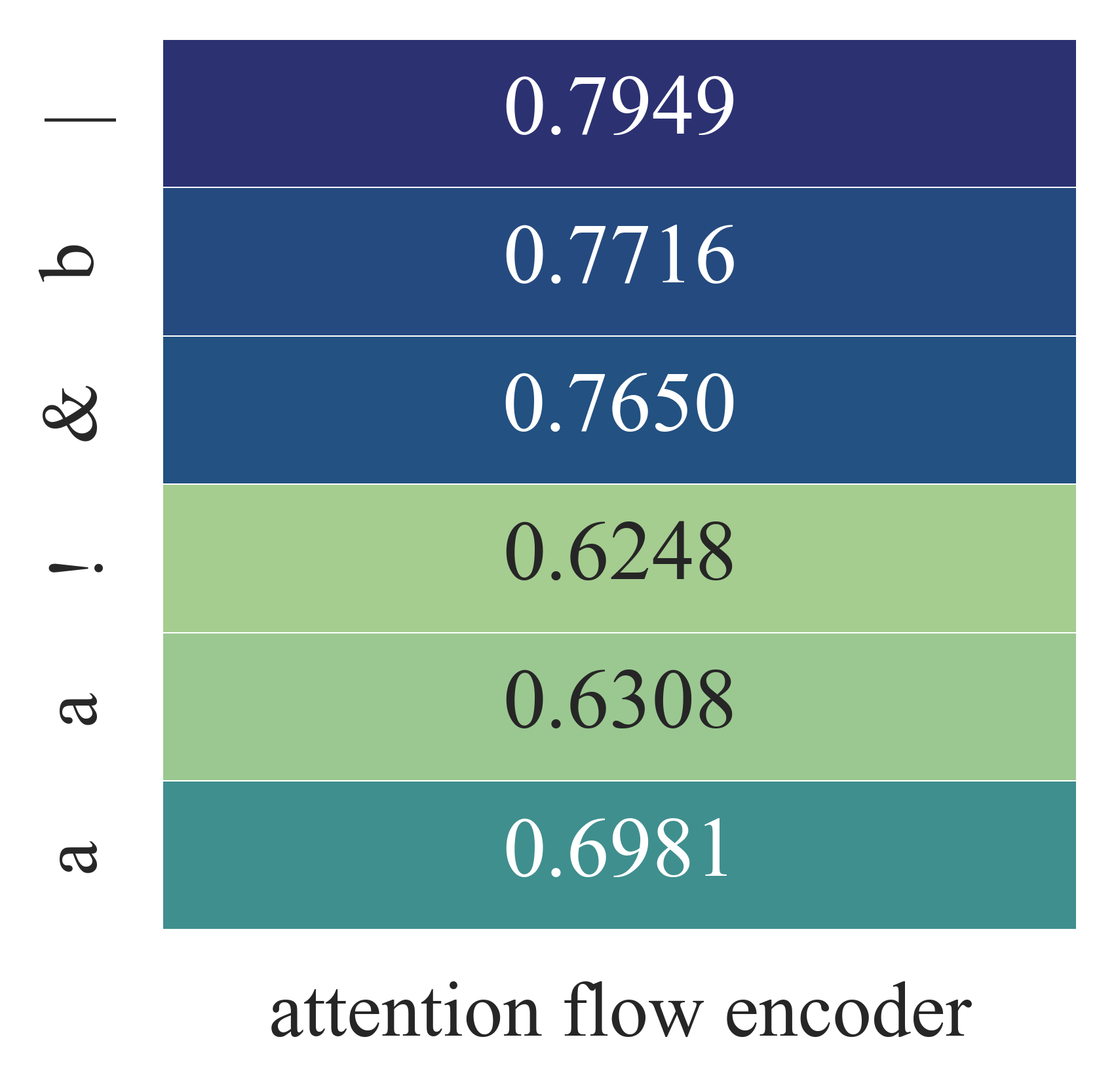}
    \includegraphics[width=.49\linewidth,valign=t]{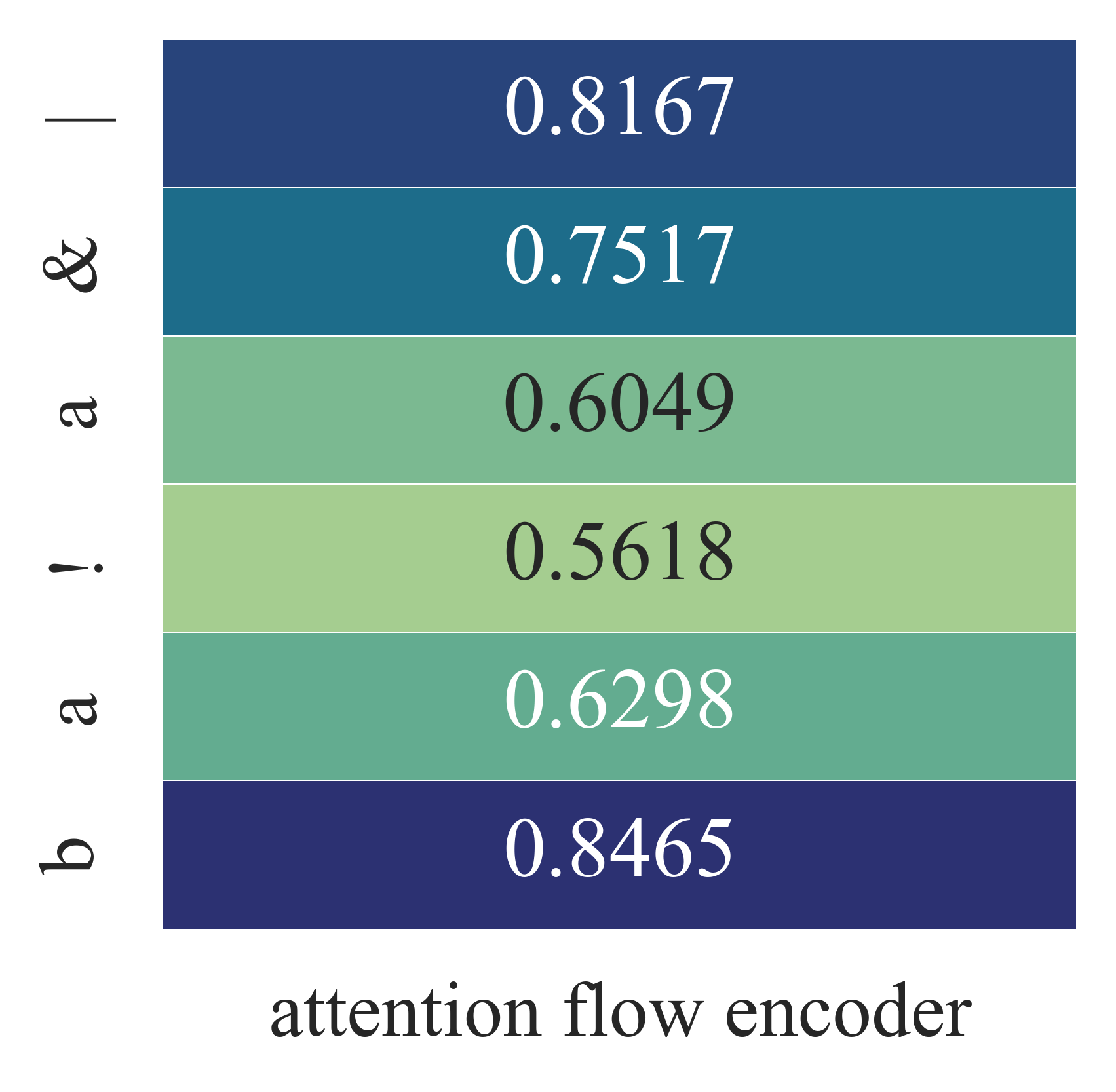}
    \includegraphics[width=.49\linewidth,valign=t]{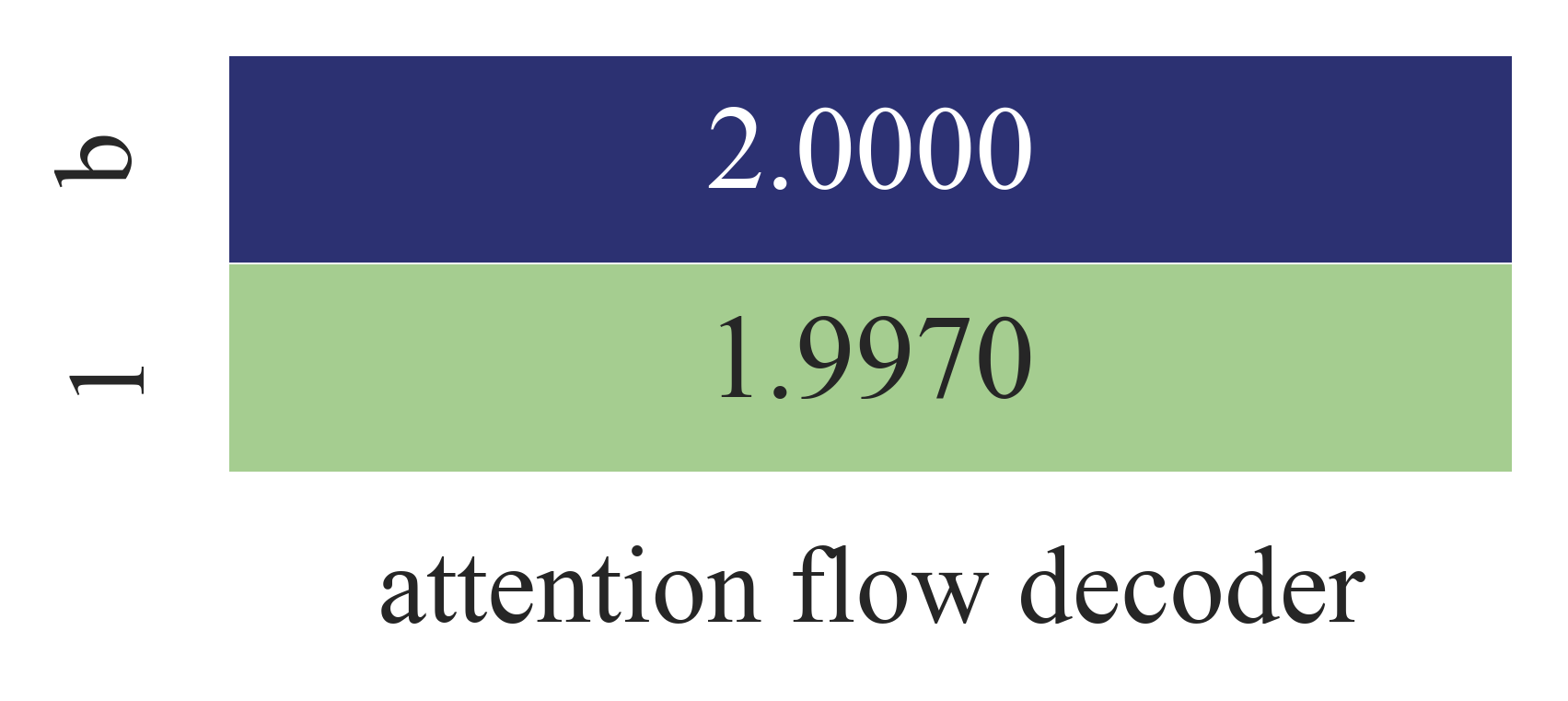}
    \includegraphics[width=.49\linewidth,valign=t]{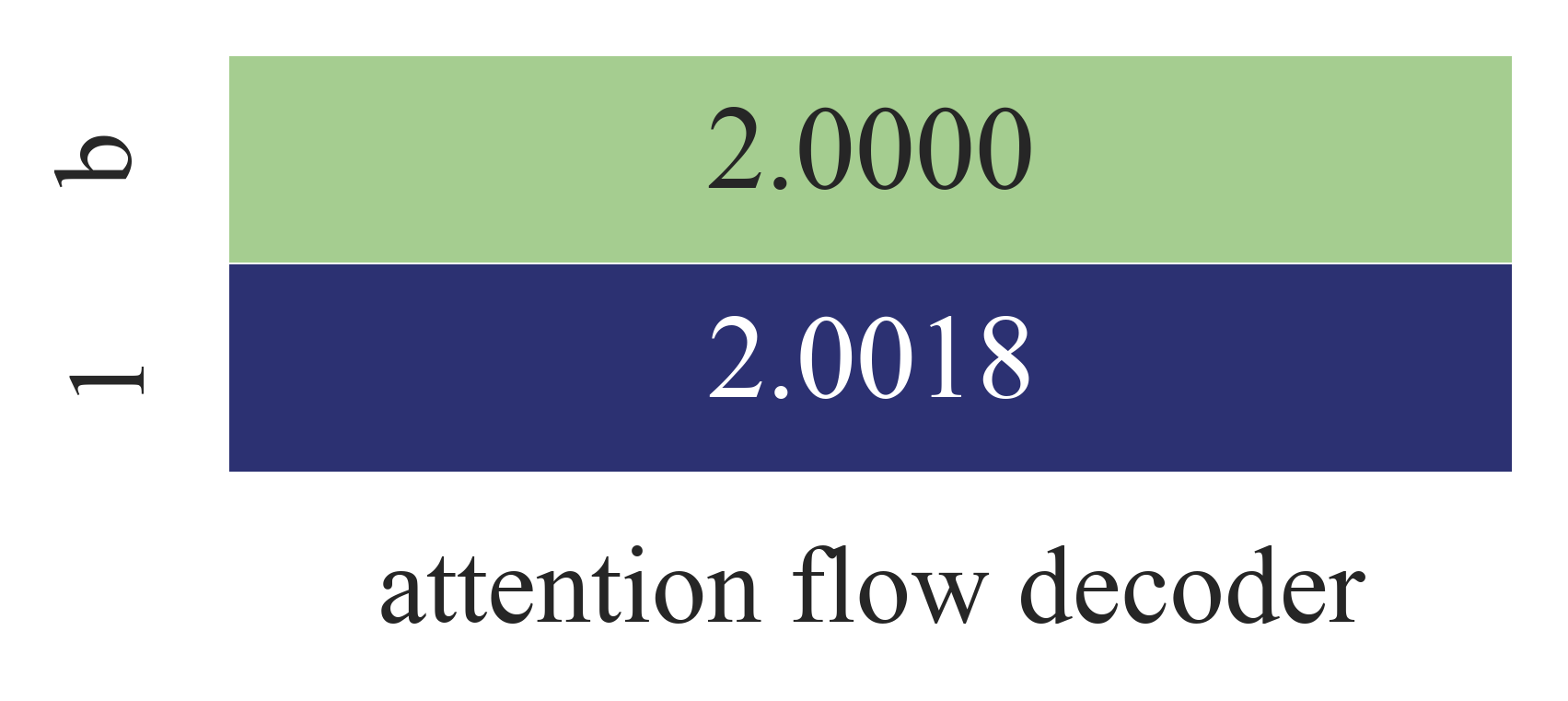}
    \caption{}
    \label{fig:propsat:null:player}
\end{subfigure}
\caption{Heatmap of the attention flow of the GPT-2 model after $1$, $4$, and $6$ predicted tokens in (a) and Heatmap depicting the attention flow for unimportant token detection in SAT assignments, reading from left to right in (b).}
\end{figure}
\subsection{Token Relevancy in Decoder Flow}
\label{sec:null_detection}
\textbf{Text completion.}
We demonstrate that our technique captures the attention flow changes during auto-regressive decoding.
For this experiment, we track the attention flow changes in GPT-2~\cite{radford2019language} while decoding the predicted tokens.
The input sequence to this decoder-only mode was ``My name is John, my profession is''. Figure~\ref{fig:GPT2:flow:changes} depicts the attention flow after decoding the first, fourth and sixth token.
The resulting flow network can be found in Figure~\ref{fig:decodernetwork} in the appendix.
Note that, in general, GPT-2 models attends the first token the most (cf. Section~\ref{sec:bias:detection}).
The differences in the attention flow are clearly visible as the attention flow on previous tokens is different for each decoding step.
Most notably, the attention flows heavily shifts towards the token ``profession'' when predicting the token ``doctor''.
We observed these heavy switches in decoder attention flow values throughout our experiments, which is why this approach is a valuable addition to existing analysis method.
The computation time of the flow values for this example took $1.38$, $1.50$, and $2.09$ seconds respectively on a personal machine.

\begin{table}[b]
\begin{subtable}[b]{0.46\textwidth}
\centering
\resizebox{\textwidth}{!}{
\begin{tabular}{cccc}
    \toprule
     Input & Negative & Neutral & Positive \\
     \midrule
     John is a killer.  &  0.9548 & 0.0417 & 0.0034\\
     John is a good killer.  & 0.8949 & 0.0967 &  0.0084 \\
     John is a good killer \includegraphics[height=1.4\fontcharht\font`\B, valign=c]{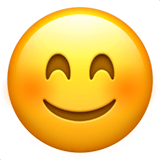} & 0.0981 & 0.3166 & 0.5853\\
     \bottomrule
\end{tabular}
}
\caption{}
\label{fig:Johnisakiller}
\end{subtable}
\begin{subtable}[b]{0.53\textwidth}
\resizebox{\textwidth}{!}{
\begin{tabular}{cccc}
    \toprule
     Network & Heads & Layers & Architecture \\
     \midrule
     DialogPT-medium (MIT)  &  16 & 24 & \textit{dec.}\\
     OPUS-MT-EN-DE (MIT)  & 6 & 8, 8 &  \textit{enc.} + \textit{dec.} \\
     PropSat (MIT) & 4 & 4, 4 & \textit{enc.} + \textit{dec.} \\
     LTLSat (MIT) & 4 & 4,4 & \textit{enc.}+ \textit{dec.} \\
     GPT-2 (MIT) & 12 & 12 & \textit{dec.}\\
     RoBERTa (MIT) & 12 & 12 & \textit{enc.}\\
     \bottomrule
\end{tabular}
}
     \caption{}
    \label{tab:networks}
\end{subtable}
\caption{Results of the sentiment analysis in (a) and the parameter overview of the models in (b).}
\label{tab:table_1}
\end{table}

\textbf{Satisfying assignments for SAT.}
In this experiment, we considered the problem of computing a satisfying assignment to a propositional logical formula.
A formula in propositional logic is constructed out of variables and Boolean connectives $\neg$ (not), $\vee$ (or), $\wedge$ (and), $\rightarrow$ (implication), and $\leftrightarrow$ (equivalence).
For example, let the following propositional formula be given:
$
b \vee (a \wedge \neg a)
$.
A satisfying assignment is a mapping from variables to truth values, such that the formula evaluates to true.
For example, a satisfying assignment for the formula above is the mapping $\{b \mapsto 1, a \mapsto 0\}$.
The variable $a$, however, has no impact on the truth value of the formula. As long as $b$ is set to $1$, a can be predicted either as $1$ or $0$.
We conducted an experiment to detect parts of propositional formula that have no impact on predicted assignments.
%
%
We trained a Transformer with an encoder and decoder to predict satisfying assignments similar to~\cite{DBLP:conf/iclr/HahnSKRF21}.
The attention flow values for the following two propositional formulas are depicted in Figure~\ref{fig:propsat:null:player}:
$\mathit{PropSAT}_1 := b \vee (a \wedge \neg a) ~\text{in tokens:}~ b|(a \& !a)$ and
$\mathit{PropSAT}_2 := (a \wedge \neg a) \vee b ~\text{in tokens:}~(a \& !a) | b$.
%
In both formulas, the disjunct $(a \wedge \neg a)$ plays no role for the satisfying assignment since any mapping of $a$ will evaluate this subformula to false.
Regardless of the position in the formula, the flow computation of the network detects this as unimportant: the inputs to the encoder $a$ and $\neg a$ have significantly less influence to the total attention flow than $b$.



\subsection{Bias Detection}\label{sec:bias:detection}

\textbf{Sentiment detection.} The flow analysis can be used to detect biases in the Transformer models.
In this experiment, we used RoBERTa~\cite{DBLP:journals/corr/abs-1907-11692} finetuned for sentiment analysis on the TweetEval~\cite{DBLP:conf/emnlp/BarbieriCAN20} benchmark and computed the influence of input tokens to the total flow deciding the classification.
We computed the attention flow values for the input tokens of the following sentences and their results, shown in Tab.~\ref{fig:Johnisakiller}
The resulting flow network can be found in Figure~\ref{fig:encodernetwork} in the appendix.
While the first two sentences ``John is a killer.'' and ``John is a good killer.'' are correctly labeled with a negative sentiment (even when having the adjective ``good'' in the sentence), having an emoji in the sentence immediately shifts the sentiment to be (falsely) labeled as positive.
The computation of the attention flow is visualized in Figure~\ref{fig:John} in the appendix.
For the first two sentences, the attention on \textit{killer} is the highest, considering only non-special tokens.
Although the same holds for the third sentence, i.e., the attention flow denotes killer as the most important word, the low attended smiley changes the sentiment to \textit{positive}.
When computing the attention flow for each head, we indeed observe heads with an attention flow of $1.0$ to the emoji (see Figure~\ref{fig:DialogPT:bias} in the appendix).
\begin{figure}
    \centering
    \begin{subfigure}[b]{.57\textwidth}
    \includegraphics[width=.32\linewidth, valign=t]{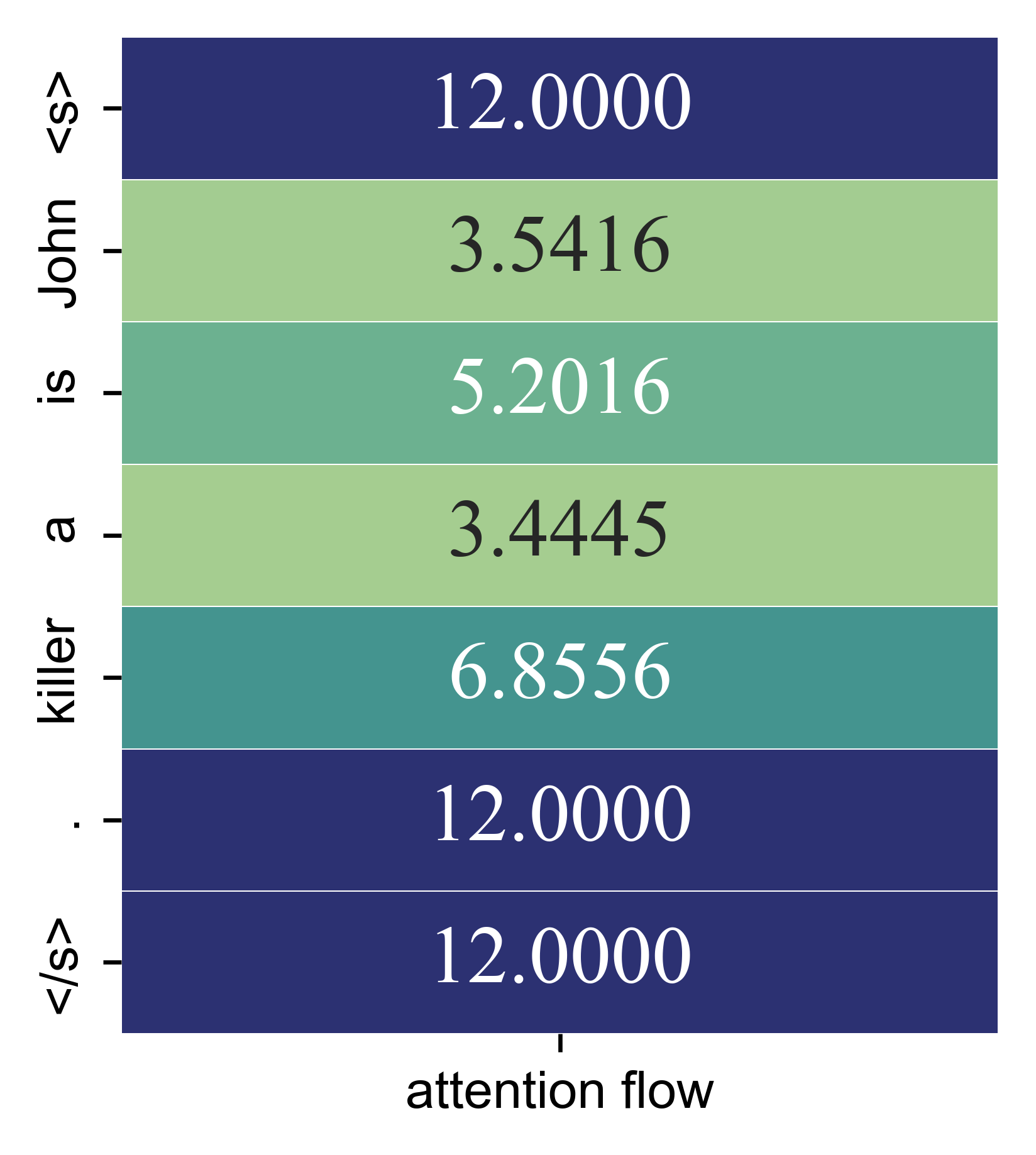}
    \includegraphics[width=.32\linewidth, valign=t]{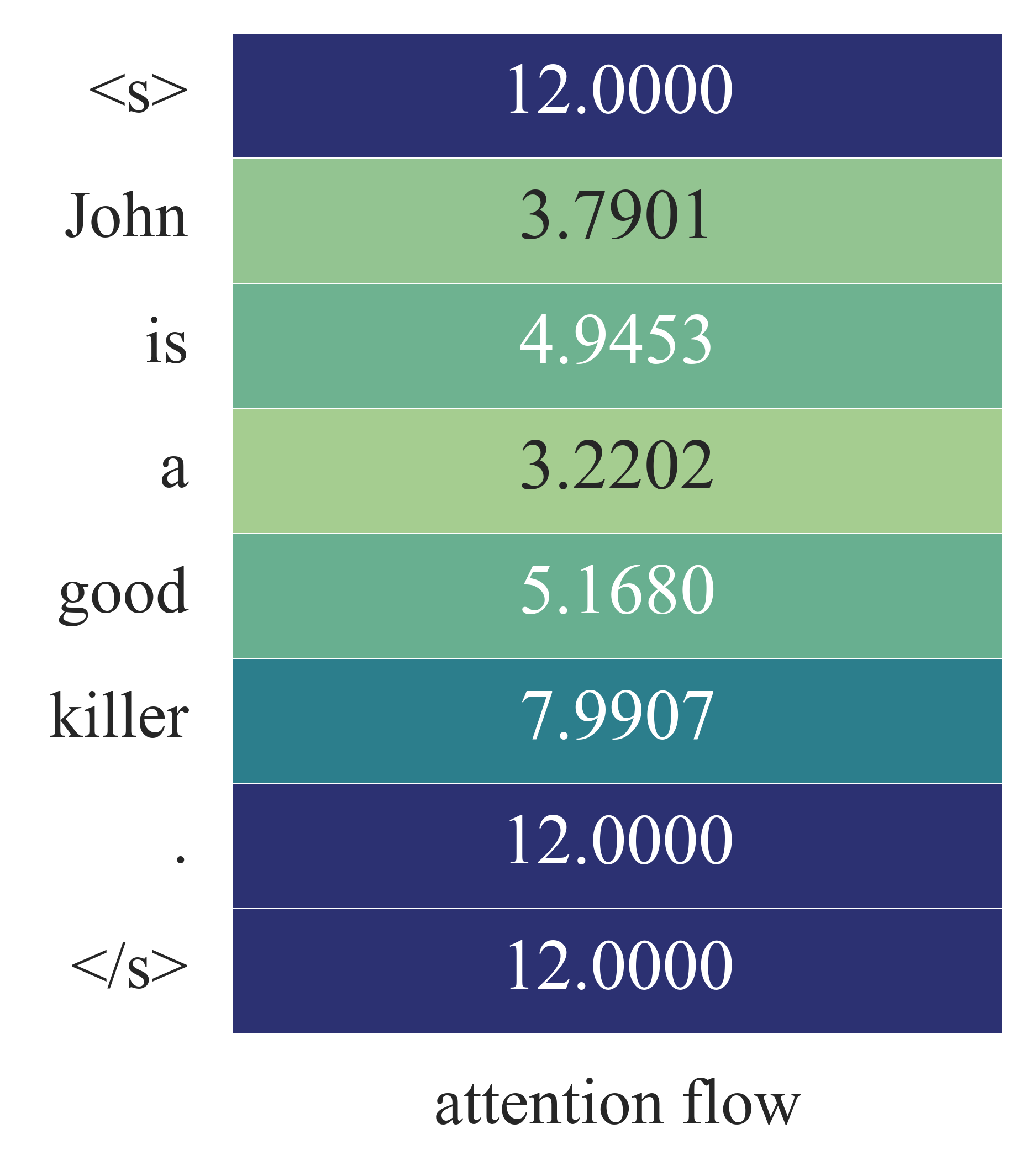}
    \includegraphics[width=.32\linewidth, valign=t]{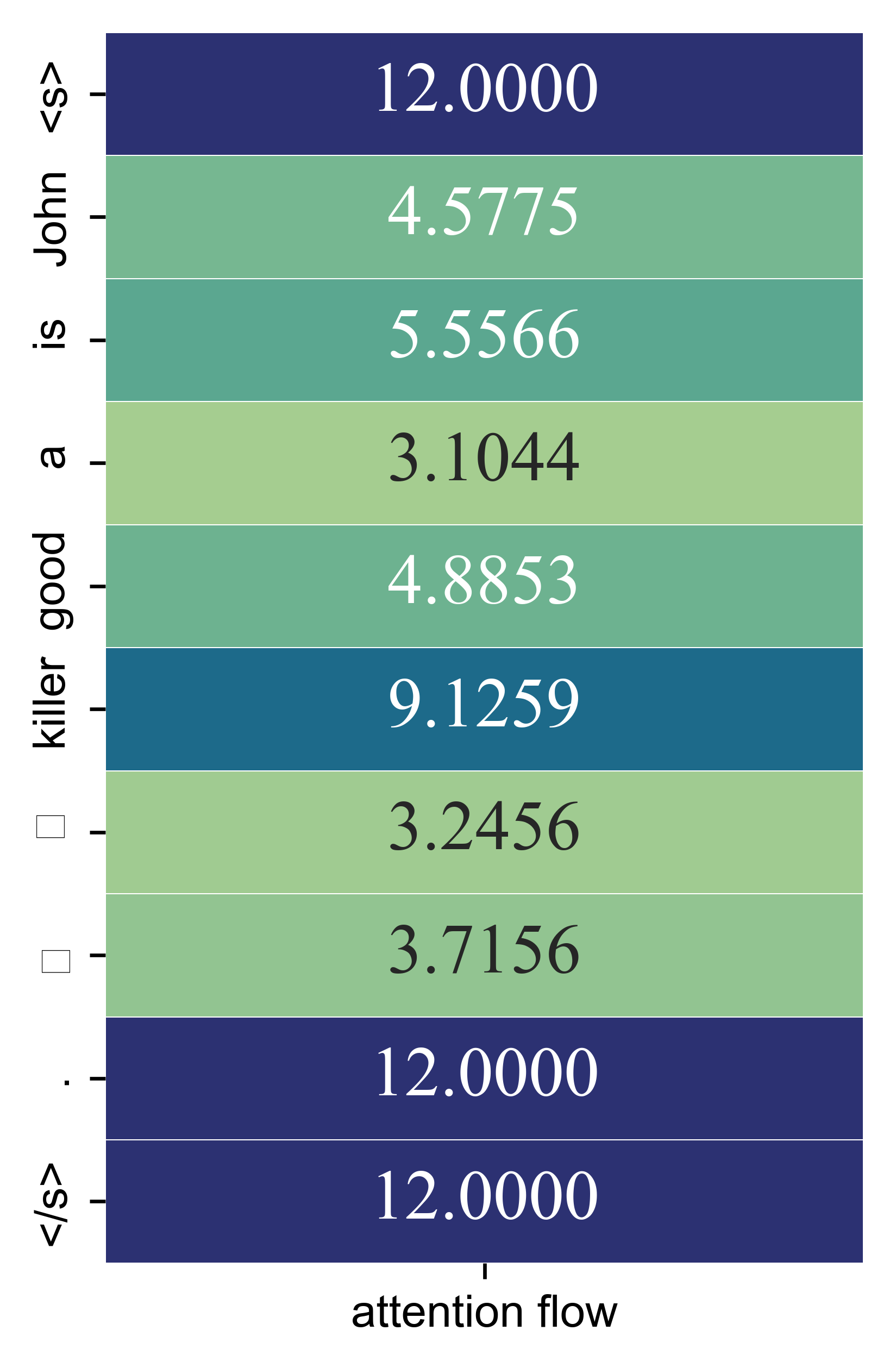}
    \caption{}
    \label{fig:John}
    \end{subfigure}
    \hfill
    \begin{subfigure}[b]{.42\textwidth}
    \centering
    \includegraphics[width=.4\linewidth,valign=t]{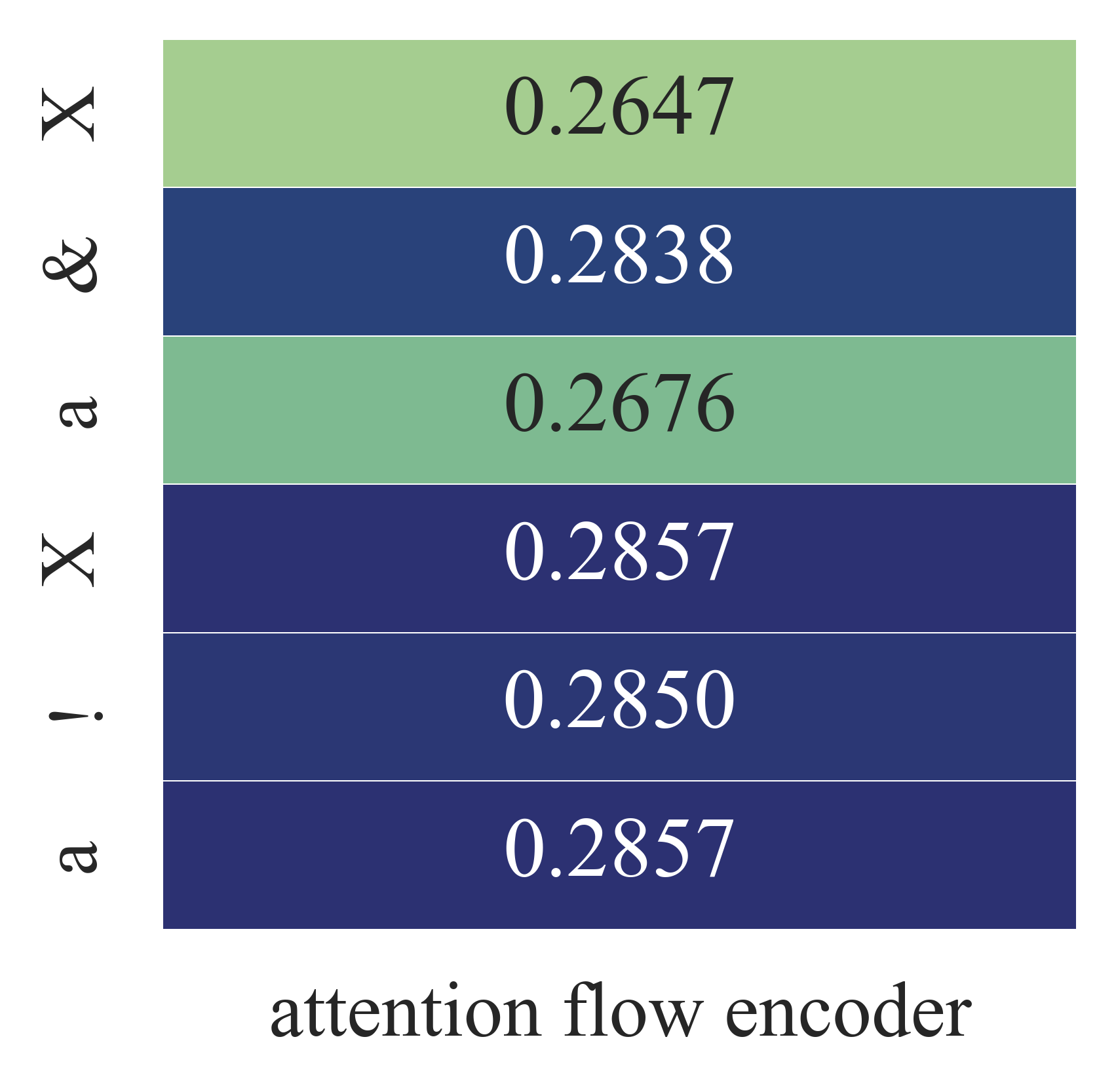}
    \includegraphics[width=.4\linewidth,valign=t]{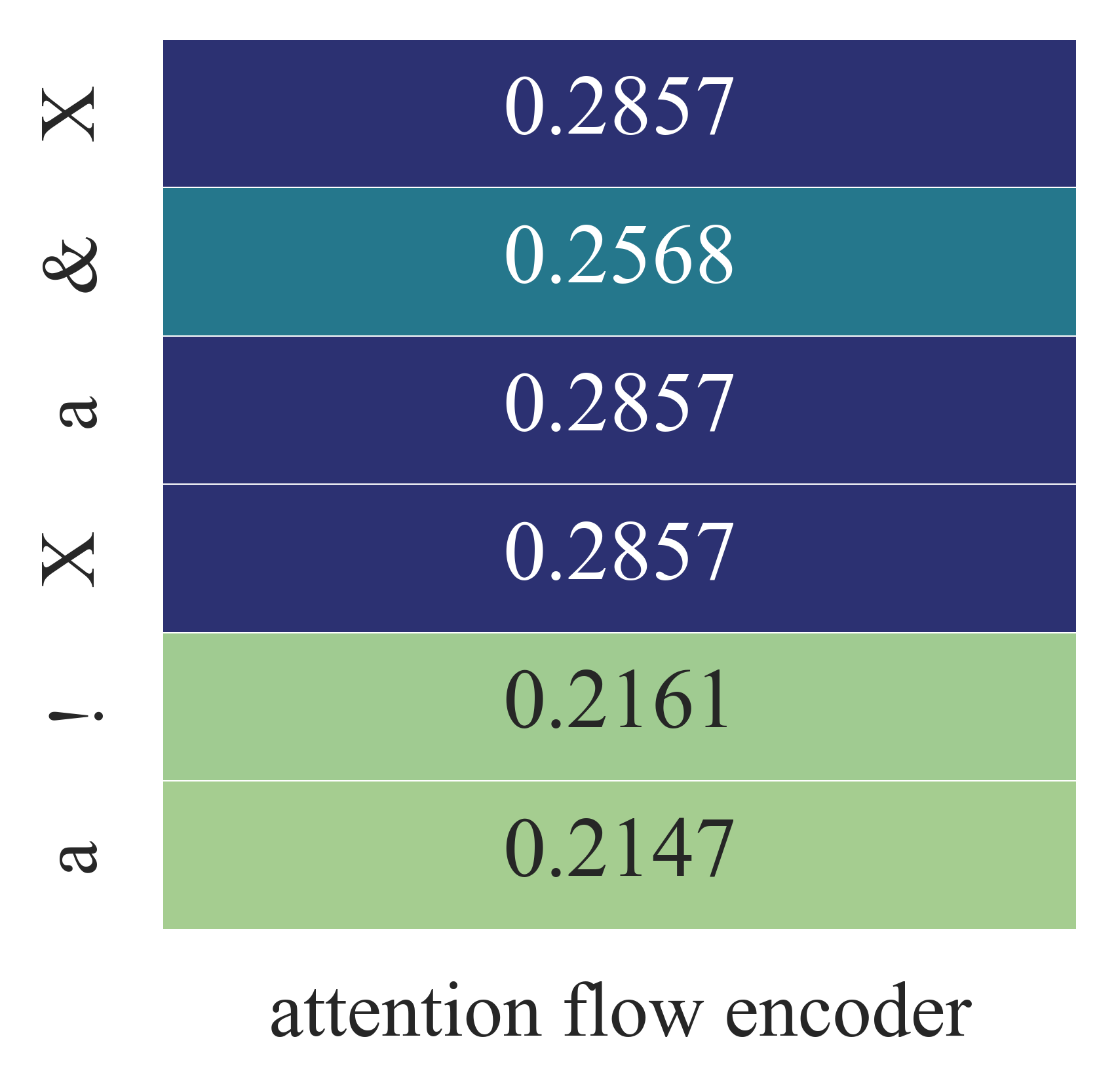}
    \includegraphics[width=.4\linewidth,valign=t]{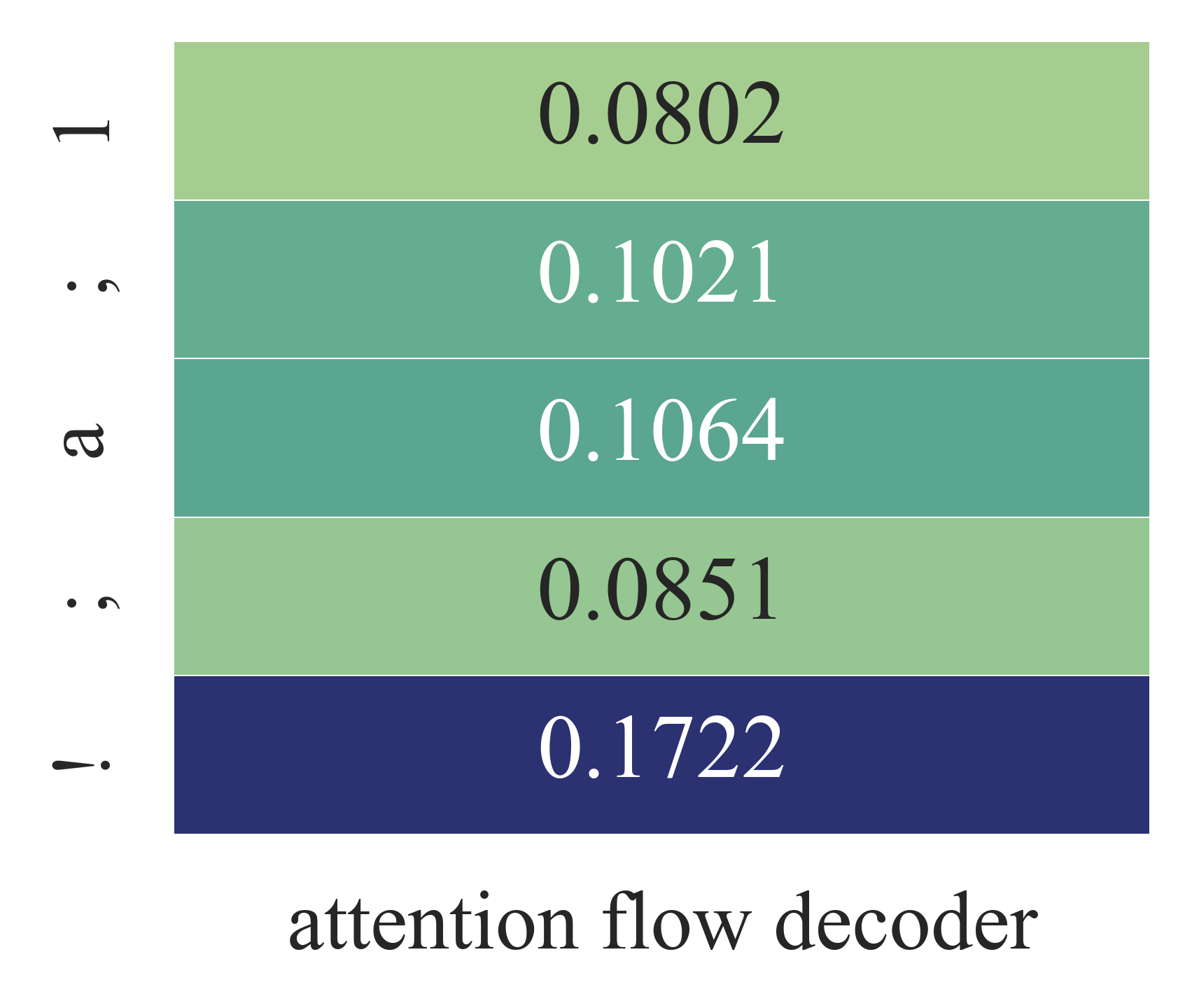}
    \includegraphics[width=.4\linewidth,valign=t]{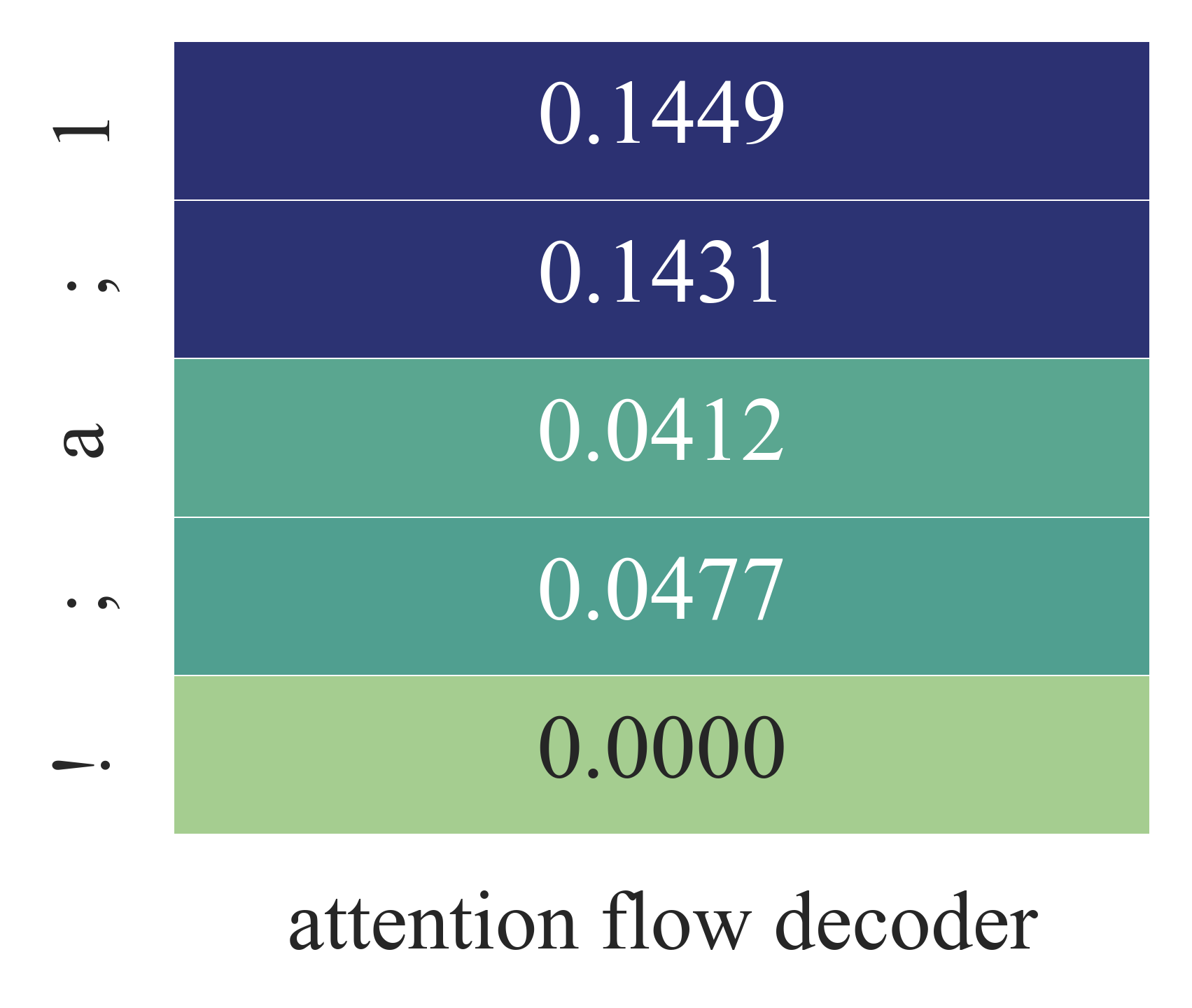}
    \caption{}
    \label{fig:LTLSat}
    \end{subfigure}
    \caption{Heatmap showing the attention flow for 3 variations of the same sentence in RoBERTa in (a) and Heatmaps for two heads of the LTLSat model, each attending a different timestep in (b).}
\end{figure}

\textbf{First token bias.}
While analyzing the attention flow of the decoder-only Transformer DialogPT~\cite{DBLP:conf/acl/ZhangSGCBGGLD20} and GPT-2~\cite{radford2019language}, we observed a heavy bias towards the first decoded token (see Figure~\ref{fig:GPT2:flow:changes} and Figure~\ref{fig:DialogPT:bias} in the appendix).
Regardless of the input tokens, the first token contributes the most to the total attention flow.
Since the DialogPT model was trained on a dataset mined from \url{reddit.com}, it might be beneficial to overattend the first token as many conversations on \url{reddit.com} consist of very short sentences or even single words.
When applying this model outside of similar domains, however, one should be aware of this bias.

\subsection{Head Task Analysis}
\textbf{LTL trace prediction.}
We conducted an experiment in predicting satisfying traces to linear-time temporal logic (LTL)~\citep{DBLP:conf/focs/Pnueli77}.
We used a Transformer trained on this task by~\citet{DBLP:conf/iclr/HahnSKRF21}.
LTL generalizes propositional logic with temporal operators such as $\LTLnext$ (next) or $\LTLuntil$ (until) and is used to specify the behavior of systems that interact with their environments over time.
An LTL formula is satisfied by a trace, which is an infinite sequence of propositions that hold at discrete timesteps.
We finitely represent satisfying traces to LTL formulas as a prefix, followed by a loop, denoted by curly brackets.
The LTL formula $\LTLnext (a \wedge \LTLnext \neg a)$, for example, denotes that on the second position $a$ must be true and on the third position $a$ must be false.
The model correctly predicts the following trace, where the first position and the loop is arbitrary and hence set to true: $\mathit{trace}: 1 ; a ; \neg a ; \{ 1 \}$.
Figure~\ref{fig:LTLSat} depicts the maxflow computation for two heads.
The left head focuses on the $\LTLnext \neg a$ part of the formula and the third position of the trace where $a$ is \emph{not} allowed to appear.
The right head focuses on the left conjunct $a$ and that it must appear on the second position of the trace.
Another example can be found in Appendix~\ref{app:LTLuntil}.

\textbf{Translation.}
\begin{figure}
\begin{subfigure}[b]{.49\textwidth}
    \centering
    \includegraphics[width=.49\linewidth, valign=t]{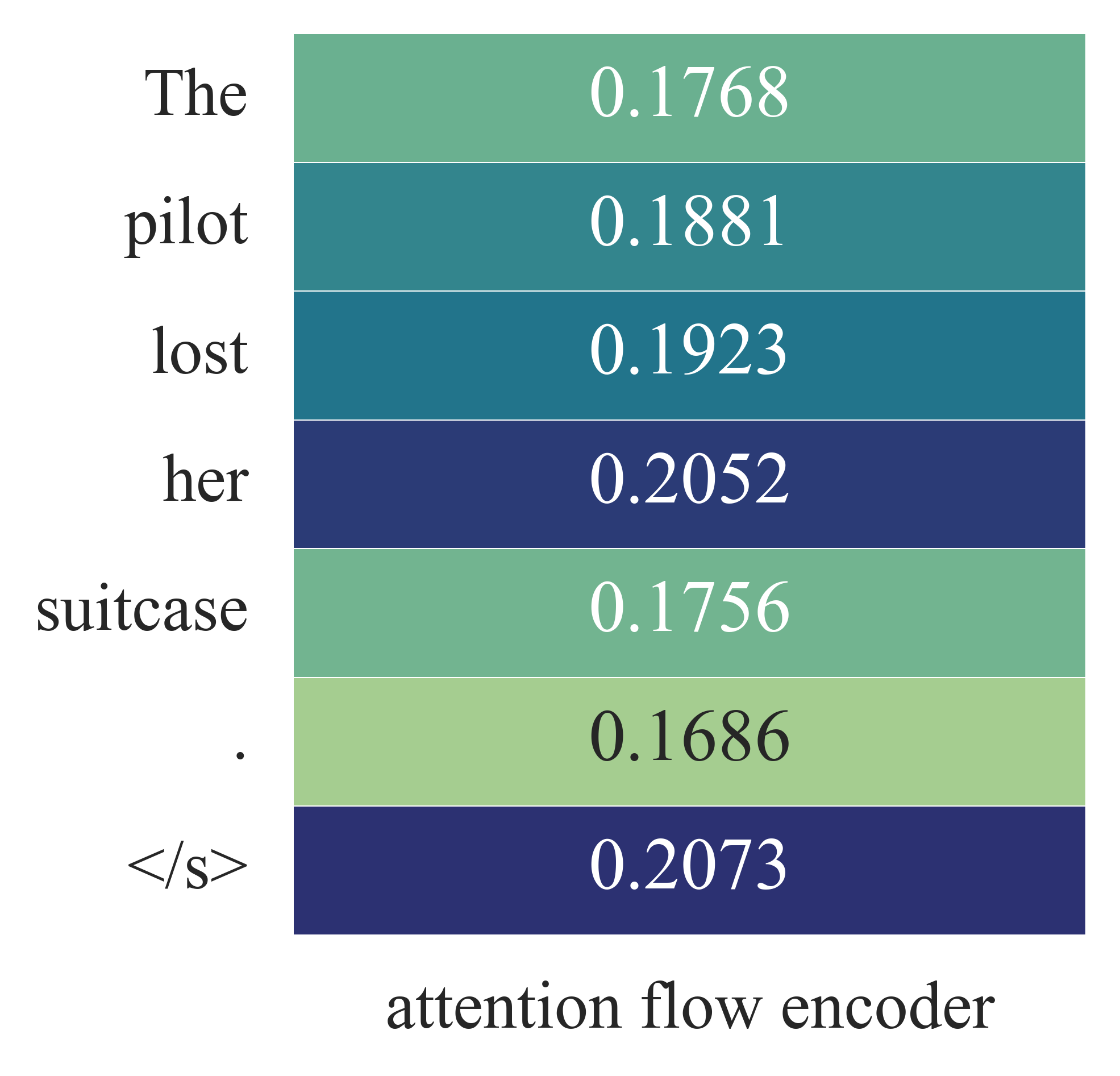}
    \includegraphics[width=.49\linewidth, valign=t]{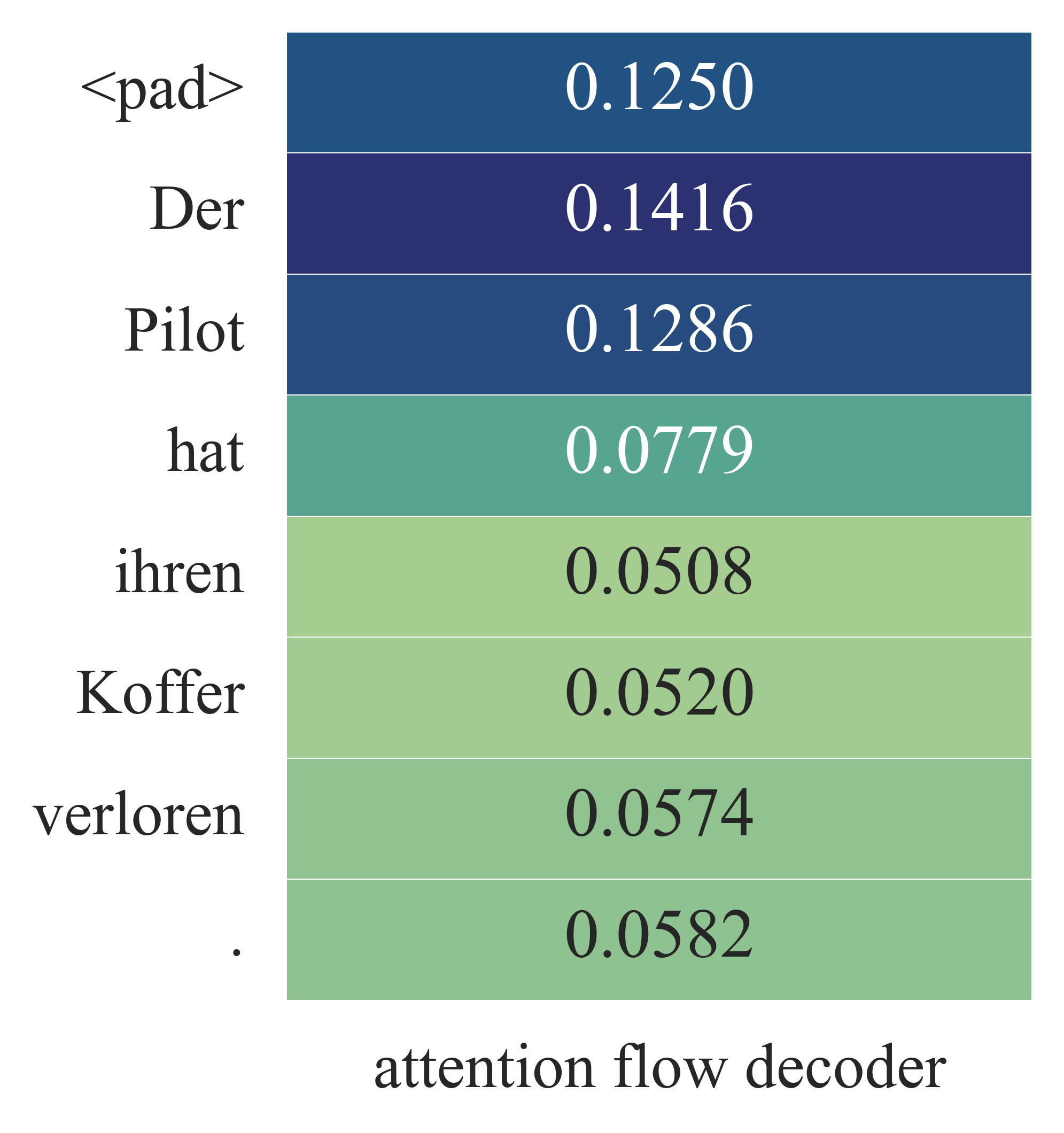}
    \caption{}
\end{subfigure}
\begin{subfigure}[b]{.49\textwidth}
    \includegraphics[width=.49\linewidth, valign=t]{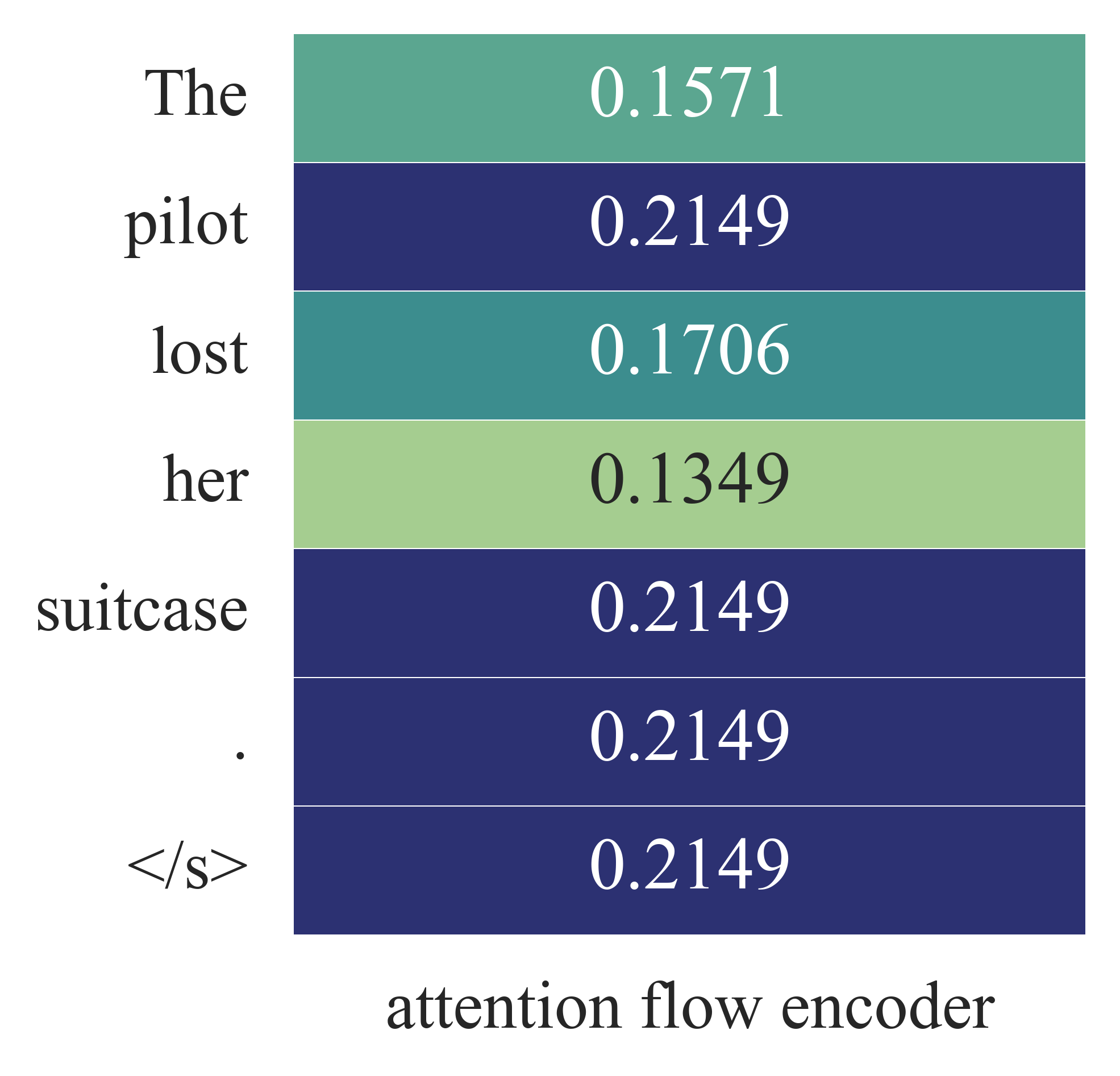}
    \includegraphics[width=.49\linewidth, valign=t]{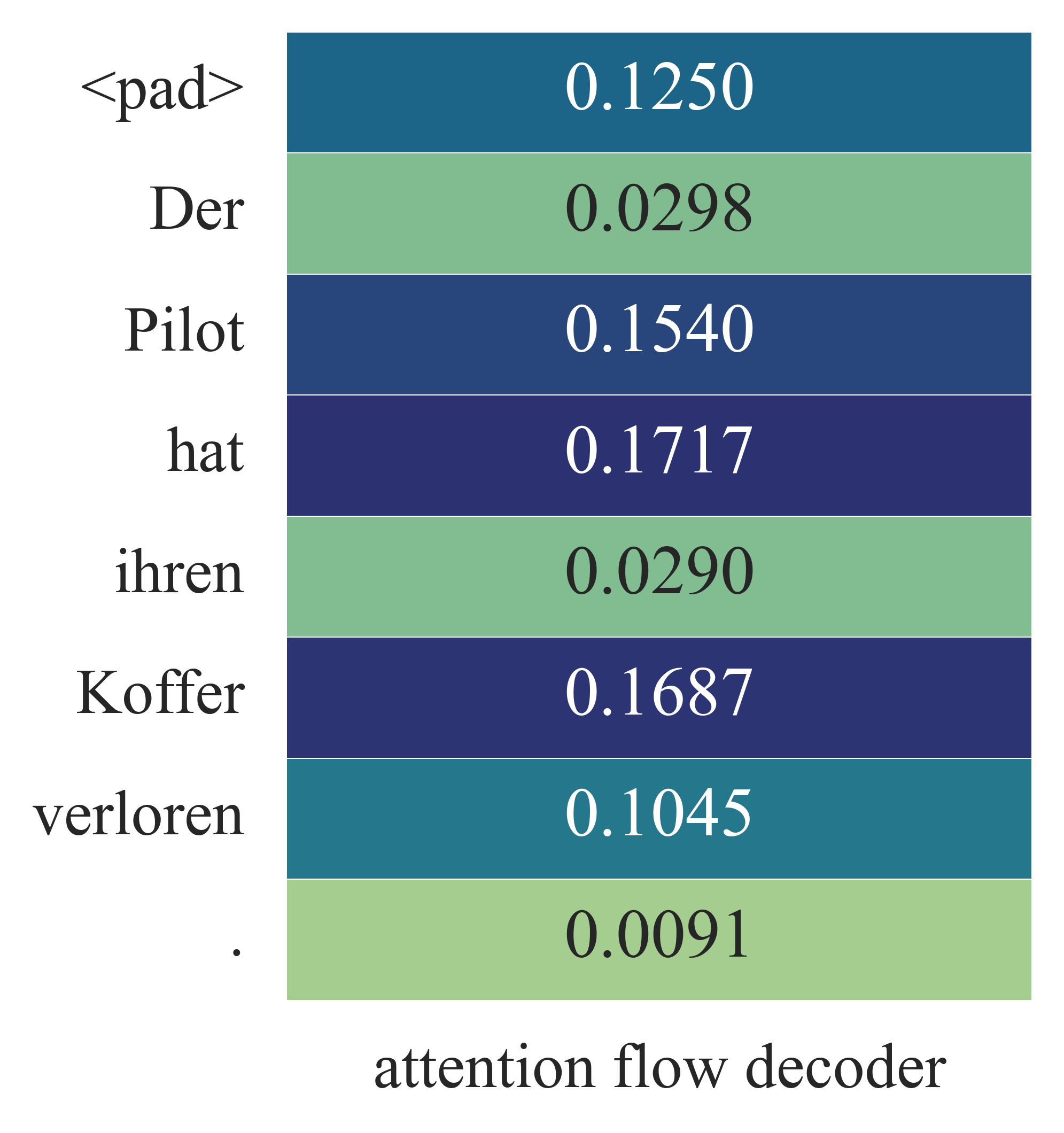}
    \caption{}
\end{subfigure}
    \caption{Heatmap for two heads, divided in encoder and decoder. The left head (a) attends the pilot, the head on the right (b) the suitcase.}
    \label{fig:HelsinkiHeadTask}
\end{figure}
In this experiment, we used the OPUS-MT-EN-DE model~\cite{TiedemannThottingal:EAMT2020} for translating between English and German.
The input sentence is ``The pilot lost her suitcase.'', which is translated to ``Der Pilot hat ihren Koffer verloren''.
The computed flow network can be found in Figure~\ref{fig:crossnetwork}.
While the meaning of the original sentence is ambiguous, as ``the pilot'' could be male or female, the translated sentence is not, since the German phrase \textit{Der Pilot} means a male pilot. It has been conjectured that such gender-biased translations can facilitate problematic stereotypes~\citep{10.5555/3157382.3157584}. Our analysis technique allows further insight into the internal mechanics of the Transformer model in such a scenario.
We analyze the task of the heads, two of them are shown in Figure~\ref{fig:HelsinkiHeadTask}.
By computing the attention flow for encoder and decoder, we can observe that the depicted heads solve opposing tasks: The head on the left hand side attends \textit{pilot lost her} in the encoder and \textit{Der Pilot} in the decoder, which is the one-to-one translation, but without a corresponding possessive pronoun.
The head on the right hand side attends \textit{pilot} and \textit{suitcase} in the encoder and \textit{Pilot hat} as well as \textit{Koffer} in the decoder.
Hence, from the attention flow, we can see that the second head has little influence on the biased translation, as neither \emph{her}, nor \emph{Der} and \emph{ihren} (the German pronoun corresponding to \emph{her}) receive significant attention. Our technique therefore gives us a useful hint that we have to analyze the first head to get to the root of this biased translation.

\section{Limitations and Conclusion}
\label{sec:conclusion}
The main limiting factor of this approach is that the attention flow in a Transformer is the largest, but not the only factor for deciding the next token prediction.
Additionally to the many residual connections (which can be incorporated to the flow networks; see Section~\ref{sec:attn_flow}), Transformer models contain feed-forward networks used as intermediate steps. 
Another, smaller caveat is that flow values cannot be compared across different model architectures as their absolute values have no meaning.
The values can solely be compared to other tokens in the same layer of the same model.
%
Our approach should thus be seen as a valuable addition (and not a replacement) to the large tool box in interpreting machine learning models. 
It  generalizes the efforts in visualizing and interpreting raw attention values.
During our experiments, we found the attention flow values computed with the presented approach very useful in analyzing models, finding biases and fixing respective datasets.

To conclude, we formalized and extended the technique to construct a flow network out of the attention values of encoder-only Transformer models to general Transformer models including an auto-regressive decoder. Running a maxflow algorithm on these constructions returns Shapley values that determine the impact of a token to the total attention flow leading to the decoder's decision. We provide an implementation of our approach that can be applied to arbitrary Transformer models.
Our experiments show the applicability of this analysis method in a variety of application domains. 
We hope that our implementation and the the constructions presented in this paper will aid machine learning practitioners and researchers in the design of reliable and interpretable Transformer models.



\section*{Acknowledgements}
We thank anonymous reviewer \#3 for value feedback on an earlier version of this paper. This work was funded by DFG grant 389792660 as part of \href{https://perspicuous-computing.science}{TRR~248 -- CPEC}, by the European Research Council (ERC)\ Grant OSARES (No. 683300), and by the German Israeli Foundation (GIF) Grant No. I-1513-407./2019.

\bibliographystyle{abbrvnat}
\bibliography{main}

\clearpage
\appendix
\section{Head Task Analysis: LTL until-operator}
\label{app:LTLuntil}
In this experiment, we provide another LTL example where one of the heads is focusing on the temporal operator in the formula and another is focusing solely on the propositions of the formula~(see Figure~\ref{fig:LTLSatUntil})
The input formula is: $a \LTLuntil b \wedge 1 \LTLuntil a$, where $1 \LTLuntil a$ denotes that finally an $a$ must occur. The network correctly outputs the following trace: $\mathit{trace:}~a \wedge b; \{ 1 \}$.

\begin{figure}[h]
    \centering
    \includegraphics[width=.4\linewidth, valign=b]{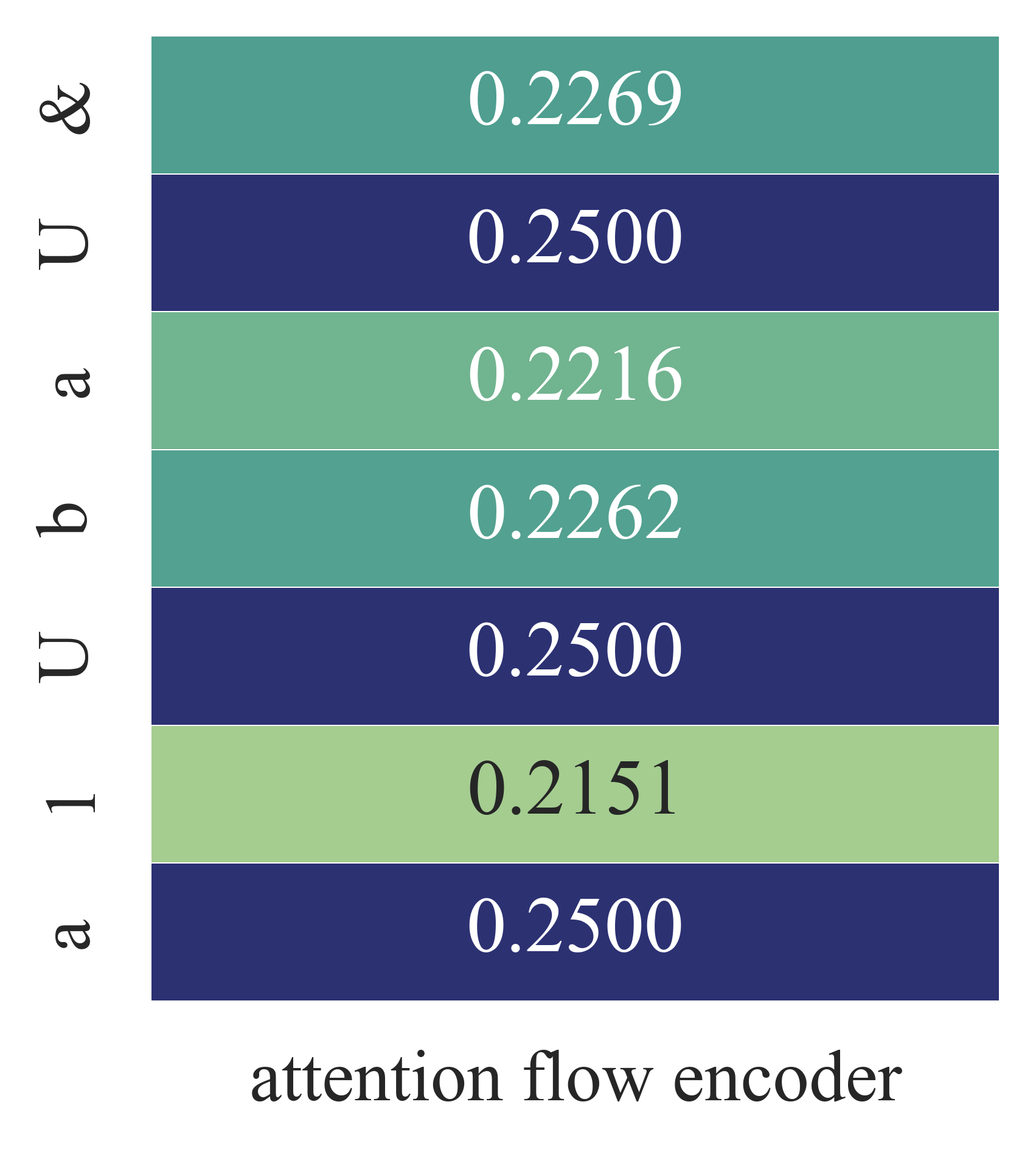}
    \includegraphics[width=.4\linewidth, valign=b]{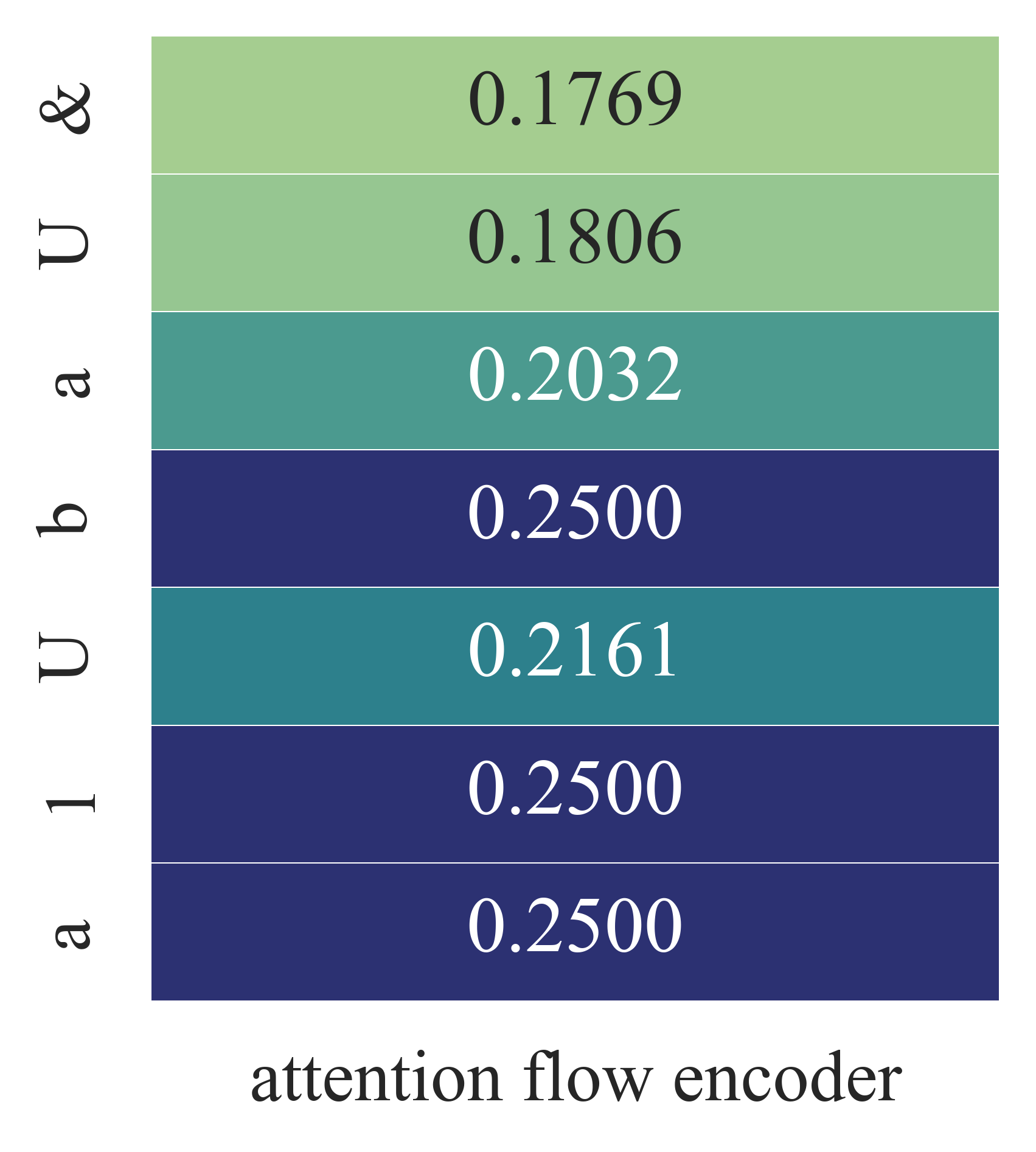}
    \caption{Heatmap of the attention flow for two heads: The left head focuses on the until-operator and the right head focuses on the propositions.}
    \label{fig:LTLSatUntil}
\end{figure}
\section{Additional Figures}

\subsection{Heatmaps} 
\paragraph{Single Head Attention Flow in RoBERTa.}
Figure~\ref{fig:Head0:RoBERTa} depicts the attention flow of the first head in the RoBERTa model. 
Intuitively, the word \textit{killer} dominates the sentiment of the sentence.
However, the output of RoBERTa is a positive sentiment, although the attention flow is mainly on the word \textit{killer} (see Figure~\ref{fig:John}).
Analyzing the individual heads, one can observe that head 0 attends the smiley with its maximal value (1.0), which could be one explanation for the output of the model.

\begin{figure}[h]
    \centering
    \includegraphics[width=.5\linewidth, valign=b]{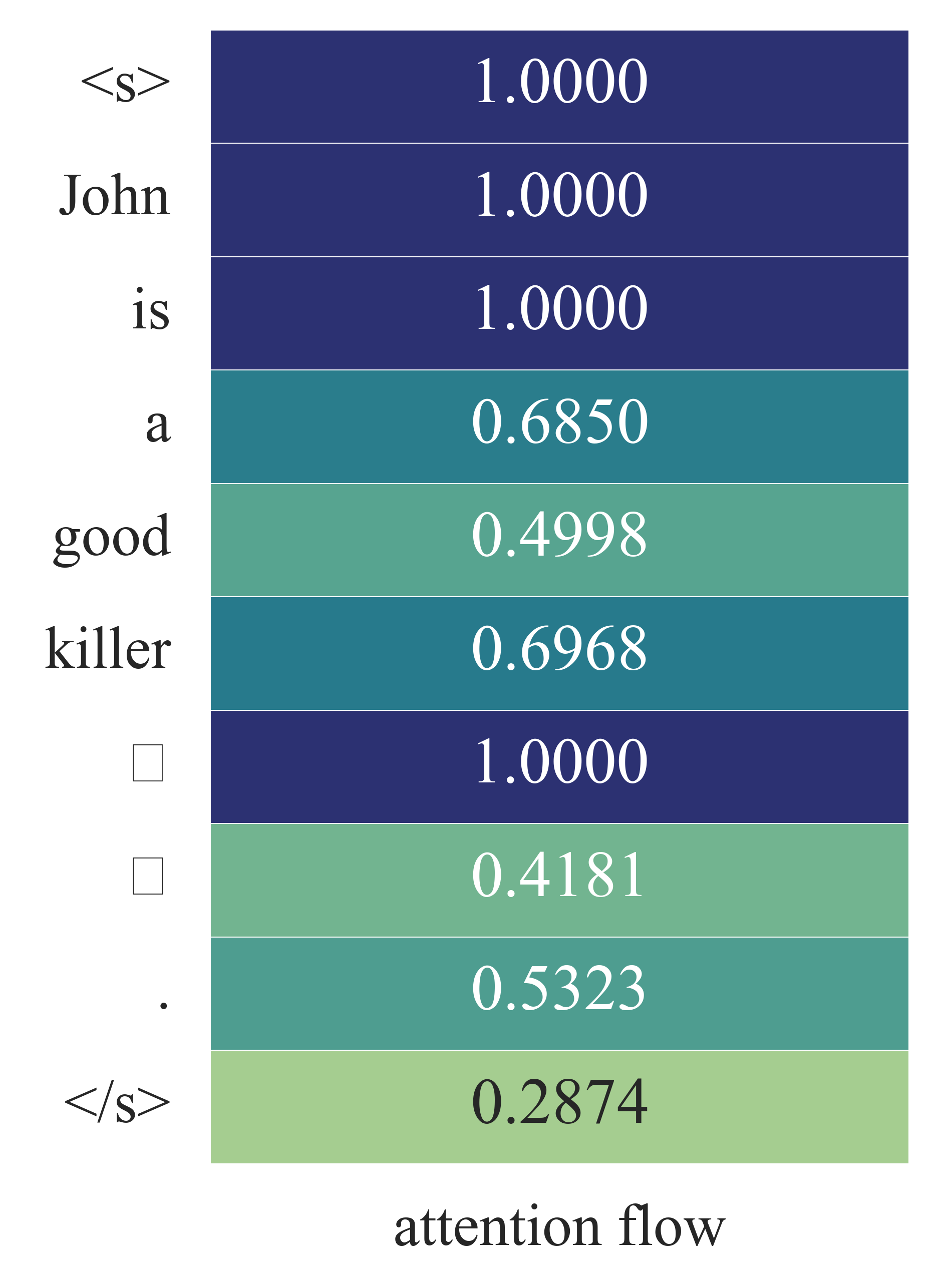}
    \caption{Heatmap showing head 0 of RoBERTa for the example in Figure~\ref{fig:John}.}
    \label{fig:Head0:RoBERTa}
\end{figure}

\paragraph{Bias in DialogPT.}
Figure~\ref{fig:DialogPT:bias} shows the attention flow from each token to the current output.
While we observe slight changes of the computed attention flow for each token, the first input token \textit{The} is highly attended, more than two times the attention flow than any other token.
Note that this observation does not directly translate into a bias in the model, it solely shows that the distribution of attention is biased.

\begin{figure}[h]
    \centering
    \includegraphics[width=.5\linewidth, valign=b]{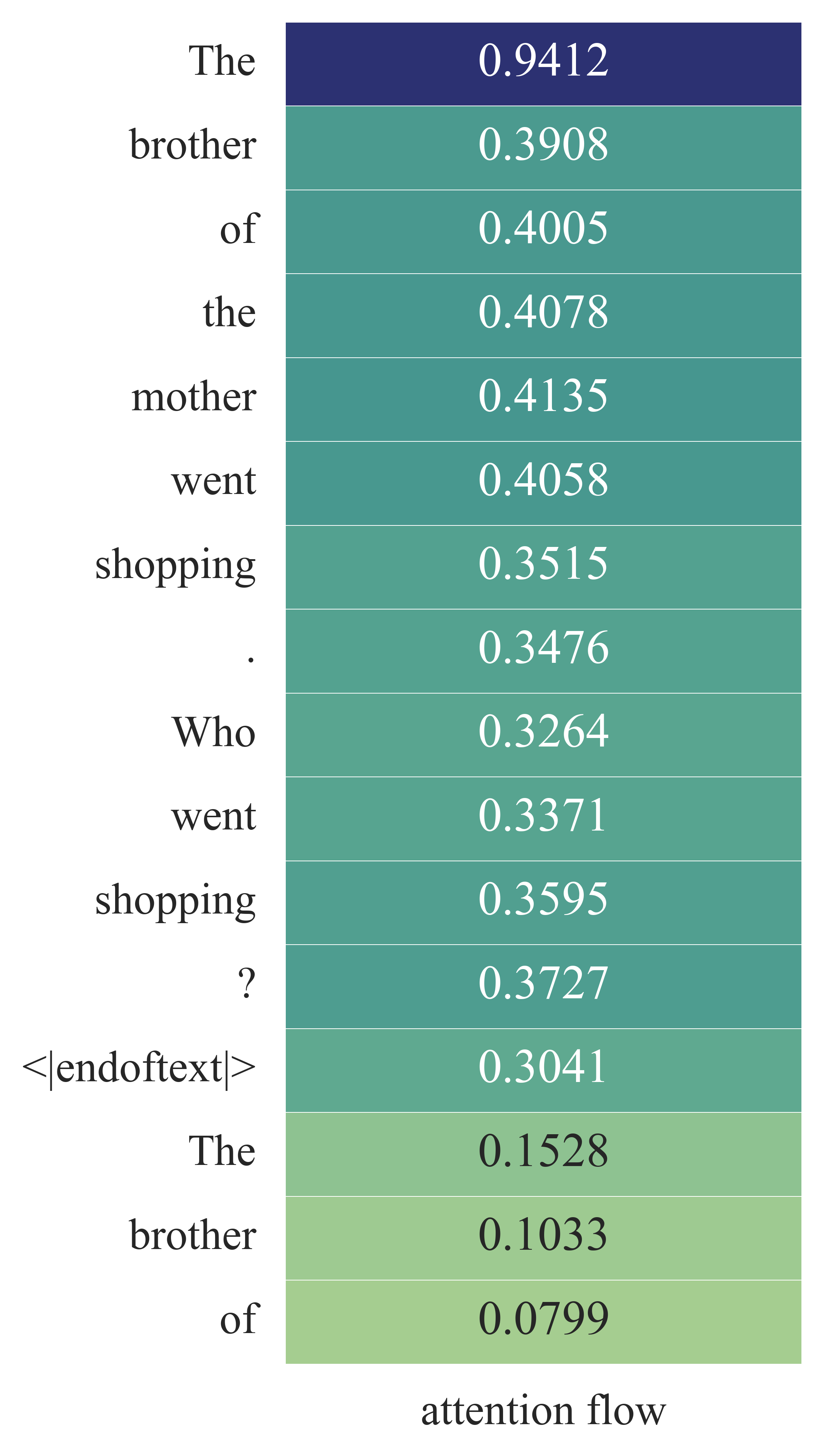}
    \caption{Heatmap showing the bias towards the first token in DialogPT.}
    \label{fig:DialogPT:bias}
\end{figure}

\subsection{Flow Networks}
\paragraph{Encoder Only.}
Figure~\ref{fig:encodernetwork} shows the flow network for RoBERTa.
The underlying architecture is encoder only with 12 layers, represented by the 12 layers of attention edges between the nodes, and an input sentence with 10 tokens, represented on the y-axis of the network.
The special property of RoBERTa is the classification token at position 0 - only the attention flow to this token in the last node layer is important.

\paragraph{Encoder Decoder.}
Figure~\ref{fig:crossnetwork} shows the flow network for OPUS-MT-EN-DE.
The underlying architecture consists of an encoder and a decoder with 8 layers each, connected by the cross attention edges in between.
For each input token and auto regressive token, we compute the attention flow to each predicted token.
In Figure~\ref{fig:crossnetwork}, the attention flow for the third predicted token is computed.

\paragraph{Decoder Only.}
Figure~\ref{fig:decodernetwork} shows the flow network for GPT-2 with the underlying decoder only architecture.
The model has 12 layers, attention can only flow from previous auto regressive tokens, including the input tokens.
We start computing attention flow for the first output token, which is connected to the terminal node in Figure~\ref{fig:decodernetwork}.
\begin{figure*}
    \centering
    \includegraphics[width=\textwidth]{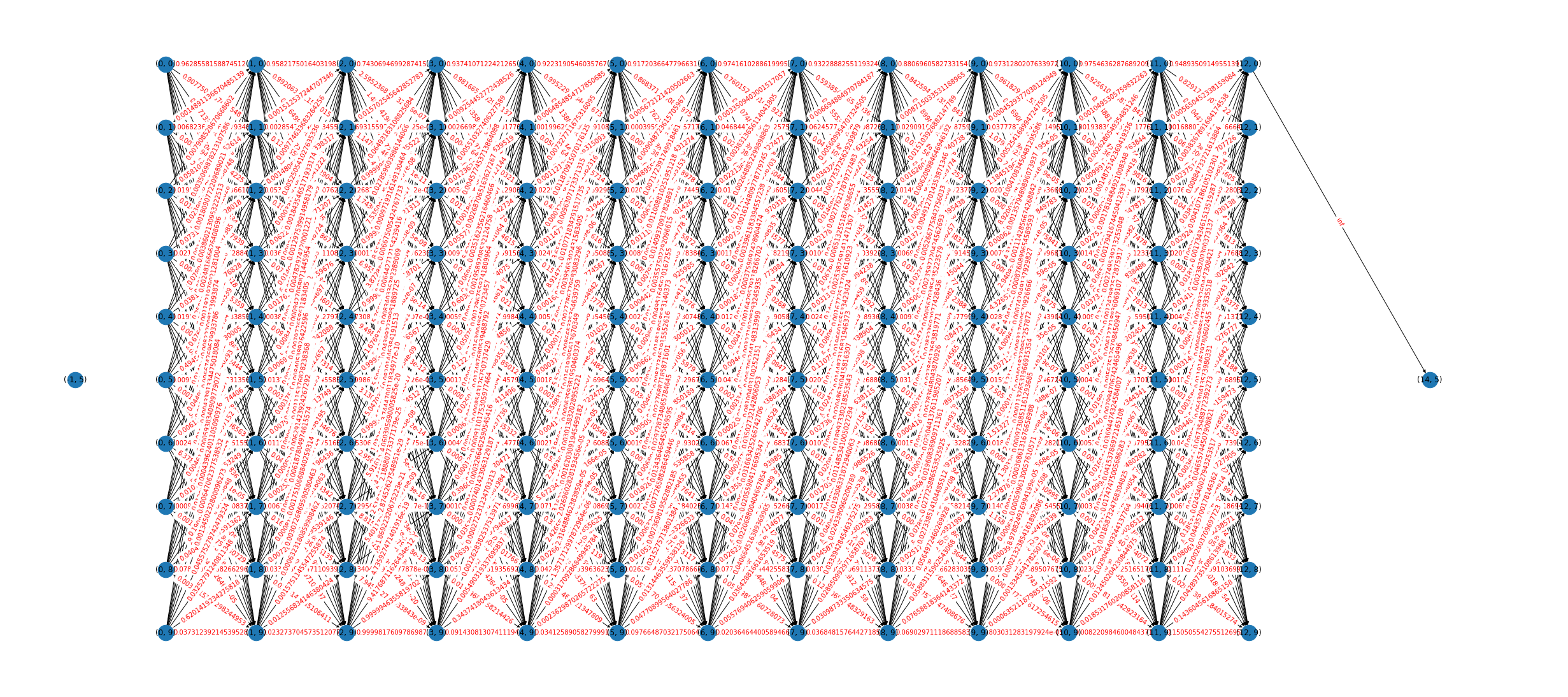}
    \caption{The flow network of the encoder-only network RoBERTa for the example in Figure~\ref{fig:John}.}
    \label{fig:encodernetwork}
\end{figure*}
\begin{figure*}
    \centering
    \includegraphics[width=\textwidth]{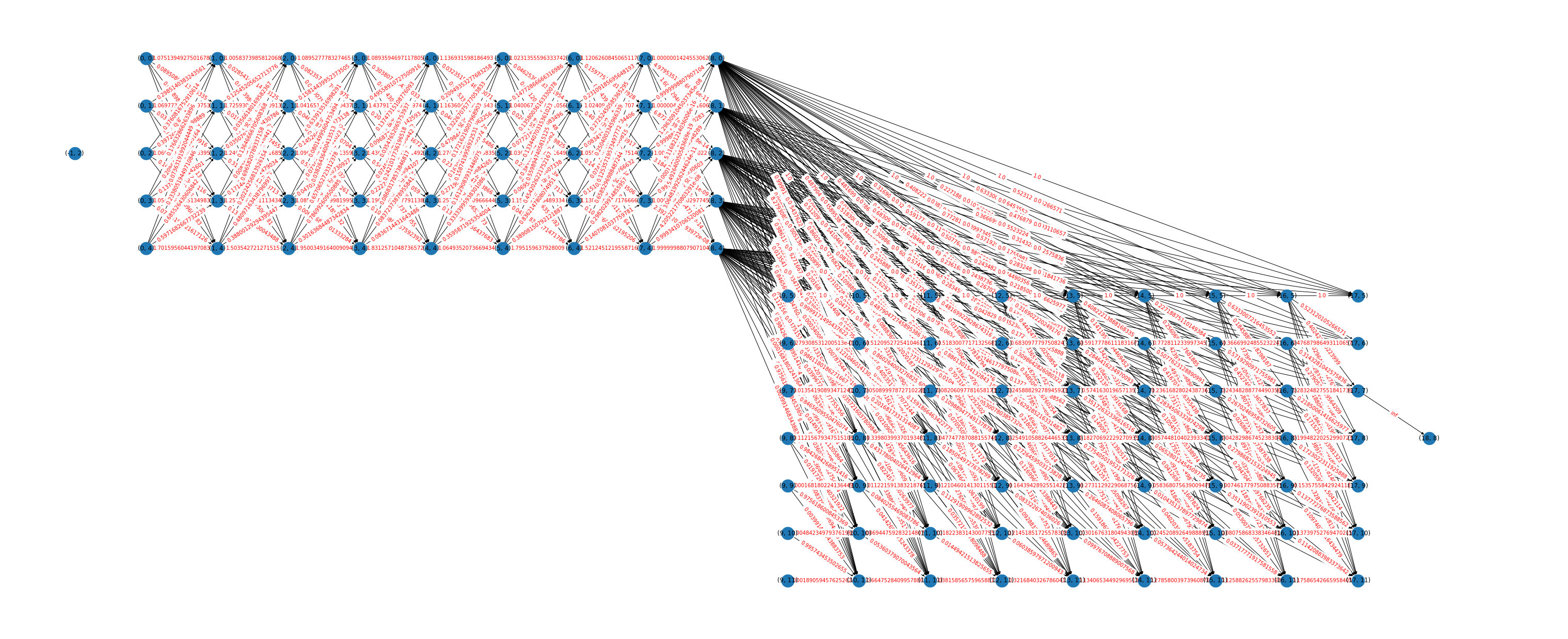}
    \caption{The flow network of the  encoder decoder architecture OPUS-MT-EN-DE for the input ``The father cooked dinner.'' and the predicted tokens ``Der Vater kochte Abendessen''.}
    \label{fig:crossnetwork}
\end{figure*}
\begin{figure*}
    \centering
    \includegraphics[width=\textwidth]{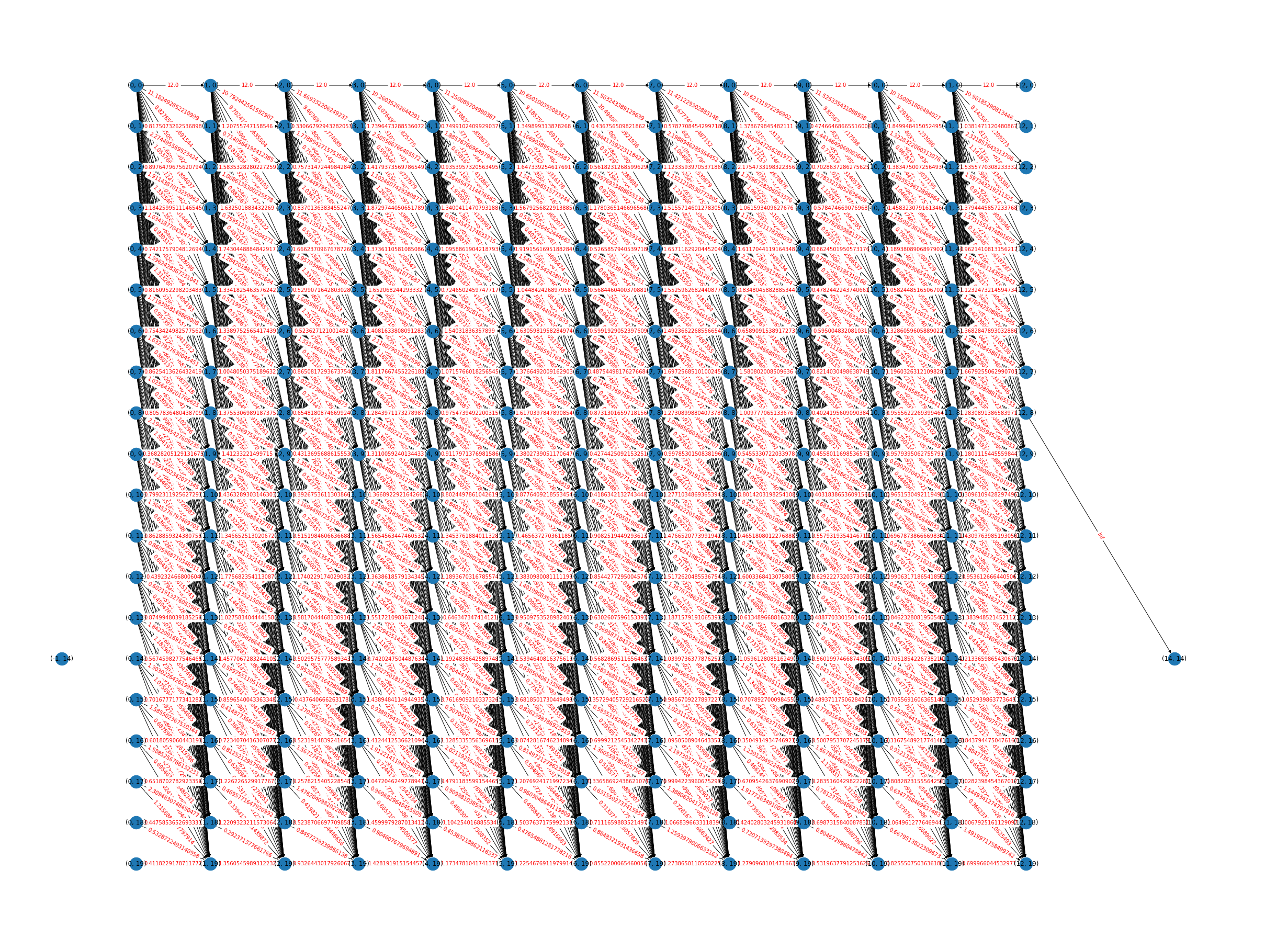}
    \caption{The flow network for the decoder only architecture GPT-2 for the example in Figure~\ref{fig:GPT2:flow:changes}.}
    \label{fig:decodernetwork}
\end{figure*}
\end{document}